\documentclass{article}
\usepackage[utf8]{inputenc}
\usepackage[a4paper]{geometry}
\usepackage{amsmath,amsbsy,amsgen,amscd,amsthm,amsfonts,amssymb,mathrsfs}

\usepackage{natbib}
\setcitestyle{authoryear,round,comma}

\usepackage[dvipsnames]{xcolor}
\usepackage{xr-hyper}
\usepackage[colorlinks=true, linkcolor=blue, breaklinks=true]{hyperref}

\usepackage{graphicx}  
\usepackage{caption}
\usepackage{subcaption}
\graphicspath{ {./images/} }

\usepackage{pgfplots}

\usepackage{titlesec}
\titleformat*{\section}{\large\bfseries}
\titleformat*{\subsection}{\normalsize\bfseries}
\titleformat*{\subsubsection}{\small\bfseries}

\theoremstyle{plain}
\newtheorem{theorem}{Theorem}[section]
\newtheorem{lemma}[theorem]{Lemma}

\newtheorem{observation}[theorem]{Observation}

\theoremstyle{definition}

\newtheorem{remark}[theorem]{Remark}

\usetikzlibrary{shapes.misc}
\tikzset{cross/.style={cross out, draw=black, minimum size=2*(#1-\pgflinewidth), inner sep=0pt, outer sep=0pt},
cross/.default={1pt}}

\newcommand{\C}{\mathbb{C}}
\newcommand{\R}{\mathbb{R}}
\newcommand{\Z}{\mathbb{Z}}

\newcommand{\G}{\mathcal{G}}
\renewcommand{\L}{\mathcal{L}}
\newcommand{\I}{\mathcal{I}}

\renewcommand{\i}{\mathbf{i}}

\newcommand{\SO}[1]{\operatorname{SO}(#1)}
\newcommand{\Tr}{\operatorname{Tr}}

\newcommand{\XX}[3]{X_{#1 #2}^{(#3)}}

\renewcommand{\AE}{\mathcal{A}_E}
\newcommand{\AI}{\mathcal{A}_I}

\definecolor{maroon}{rgb}{0.5, 0.0, 0.0}

\title{Orientation Estimation of Cryo-EM Images for Molecules with Arbitrary Symmetry by Semidefinite Programming}
\author{Yunpeng Shi\footnote{Department of Mathematics, University of California, Davis}
, Amit Singer\footnote{Department of Mathematics and Program in Applied and Computational Mathematics, Princeton University}, 
Jinxin Wang\footnote{Department of Statistics, University of Chicago},
Ruiyi Yang\footnote{School of Mathematical Sciences, MOE-LSC, Institute of Natural Sciences,  Shanghai Jiao Tong University (yry-ins@sjtu.edu.cn)}
}
\date{}

\begin{document}

\maketitle

\begin{abstract}
This paper extends the non-unique games framework introduced in \cite{bandeira2020non} to address symmetric structures. 
Precisely, we consider the problem 
\begin{align}
    \underset{g_1,\ldots,g_n\in \G}{\operatorname{min}}\,\, \frac{1}{|S|}\sum_{i,j=1}^n \sum_{s\in S} f_{ij}(g_i s g_j^{-1}), 
\end{align}
where $\G$ is a compact group, $S\subset \G$ is a finite subgroup of symmetry elements, and $f_{ij}:\G\rightarrow \R$ is a loss function. 
A notable example is common lines based orientation estimation of cryo-EM images for molecules with point group symmetry $S$, where $\G$ is the rotation group $\SO{3}$. 
With a semidefinite programming relaxation, we turn the problem into a convex program that is solved with the alternating direction method of multipliers.
Unlike existing approaches which are tailored to specific symmetry groups, ours provides a unified computational framework for addressing all point group symmetries.
We demonstrate the applicability of the proposed algorithm on the cryo-EM orientation estimation problem on both simulated and experimental datasets. 
In particular, our algorithm is effective in generating high-quality ab initio models from only a few class averages computed from experimental images. 
\end{abstract}

\section{Introduction}
Cryogenic electron microscopy (cryo-EM) has emerged as a powerful technique for determining the three-dimensional structures of macromolecules at near atomic resolution \citep{wang2006cryo,frank2006three,nakane2020single,yip2020atomic}.  
Symmetry, a fundamental feature of nature, is pervasive in the structural organization of protein molecules. 
However, incorporating symmetry in cryo-EM data analysis poses significant mathematical and computational challenges, especially in the orientation estimation problem for 3-D ab initio modeling.
In particular, existing common-lines methods do not extend naturally to the symmetric case because the pairwise relative orientations are estimated up to arbitrary symmetry group elements, which in turn compromise the subsequent steps. 

In this paper, we tackle this problem by extending the non-unique games (NUG) framework \citep{bandeira2020non} to handle symmetric structures.
Our proposed computational framework provides a unified treatment for all symmetry groups rather than a case-by-case design as done in existing works. 
More broadly, our framework addresses optimization problems over general compact groups from information about group offsets, covering many other important applications that include group synchronization and multi-reference alignment when the underlying signal has an intrinsic symmetric structure.

\subsection{The NUG Framework}

Suppose $\phi:\R^3 \rightarrow \R$ represents the electrostatic potential of a molecule with a non-trivial point group symmetry $S$ (a discrete subgroup of $\SO{3}$ with $|S|$ elements). 
That is,  $s\circ \phi = \phi$ for all $s\in S$, where the group action is defined as usual $s\circ \phi (x) = \phi(s^{-1}x)$. 
Consider the usual cryo-EM image formation model 
\begin{align}\label{eq:cryoem image formation}
    I_i = h_i\ast P(g_i \circ \phi) + \epsilon_i,\qquad i=1,\ldots,n,
\end{align}
where $g_i$ is the unknown rotation of the molecule corresponding to the $i$-th image, $P$ is the tomographic projection operator in the $z$-direction defined for a generic potential $\phi$ by 
\begin{align*}
    P(\phi)(x,y) = \int_\R \phi(x,y,z) dz,
\end{align*}
and $h_i$ is the inverse Fourier transform of the contrast transfer function (CTF) of the $i$-th image and assumed to be known, with $\ast$ denoting convolution. 
For simplicity, we are ignoring the effect of imperfect centering so that shifts are not included in the model \eqref{eq:cryoem image formation}.

The study of the orientation estimation problem dates back to the angular reconstitution method for non-symmetric molecules proposed by \cite{vainshtein1986determination,van1987angular}.
The crucial observation is that common lines between three images uniquely determine their relative orientations up to handedness. 
The method proceeds in a sequential manner by estimating the relative orientation of a new image with respect to already estimated ones based on the common lines information. 
Later, \cite{farrow1992posteriori} proposed obtaining the relative orientation of a new image in a least squares fashion.
A major issue with these sequential methods are their reliance on correct detection of common lines, which is challenging in low signal-to-noise regimes, and the accumulation of the estimation error. 
\cite{penczek1996common} tried to circumvent this issue by estimating the orientations for all images simultaneously by minimizing a global energy functional. 
However, this method is not very practical due to its overly large search space. 
\cite{mallick2006structure} introduced a Bayesian approach where the common lines between a pair of images are determined from their common lines with different image triplets.
An issue of this approach is the requirement of correct identification of at least seven common lines simultaneously, rendering it not suitable when the detection rate of correct common lines is low. 
An improved Bayesian approach was proposed by \cite{singer2010detecting} based on voting, which lowered the requirement to two common lines. 
Further advancements were introduced in \cite{singer2011three,shkolnisky2012viewing} based on semidefinite programming and synchronization approaches, leading to algorithms that are much more robust to noise.

The common-lines methods have made significant progress in solving the orientation estimation problem and demonstrated their empirical success on experimental datasets. 
Nevertheless, there is still further room for improvement in that these methods do not fully exploit all the information contained in the images. 
In particular, the methods mentioned above all take a two-step procedure by first computing the common lines (or a couple of top ones) and then estimating orientations that are most consistent with them.
This means that all information other than those contained in the detected common lines are discarded when estimating the orientations. 
At low signal-to-noise ratios, the true common lines may not appear among the top ones detected, making them no longer available in the second step.  

Motivated by this observation, \cite{bandeira2020non} proposed the NUG framework that estimates the orientations without explicit detection of common lines. 
Precisely, the authors consider the following optimization problem
\begin{align}
    \underset{g_1,\ldots,g_n\in \G}{\operatorname{min}}\,\, \sum_{i,j=1}^n f_{ij}(g_ig_j^{-1}), \label{eq:NUG}
\end{align}
where $\G=\SO{3}$ and $f_{ij}:\G\rightarrow\R$ is a measure of the agreement between the common lines in images $I_i$ and $I_j$ if their relative orientation was $g_ig_j^{-1}$ (see Section \ref{sec:cryoem fij} for its precise form).
On a high level, the NUG framework searches for the orientations that  maximize all pairwise agreement of common lines that would be implied by these orientations. 
The algorithm only exploits the common lines implicitly and leverages full information contained in all images. 
\cite{bandeira2020non} exploits representation theory to first turn \eqref{eq:NUG} into a matrix optimization problem with a linear objective 
\begin{align}\label{eq:NUG linear obj}
    \underset{\text{constraints on } X}{\operatorname{min}}\,\, \langle C,X\rangle 
\end{align}
for some matrix $C$ and $X=X(g_1,\ldots,g_n)$, followed by a semidefinite programming (SDP) relaxation on the constraints. 
The authors further demonstrated that the resulting algorithm is able to tolerate higher noise levels than other common-lines methods. 

It is worth mentioning that \eqref{eq:NUG} provides a general framework for optimization over compact groups $\G$ from information about group offsets. 
The terminology \emph{non-unique games} is motivated by the unique games problem in theoretical computer science \citep{khot2002power,khot2010unique}, which aims to solve a linear system over $\Z_L$ consisting of equations of the form $x_i-x_j \equiv b_{ij}$ with $\equiv$ denoting equality modulo $L$, for a set of pairs $(i,j)\in T$. 
Once $x_i$ is fixed, so is $x_j$ for any $j$ such that $(i,j)\in T$, hence the name unique games. 
This problem can be formulated as in \eqref{eq:NUG} by setting $\G=\Z_L$ and $f_{ij}(g)= -\delta_{g\equiv b_{ij}}$ as a Dirac delta function for $(i,j)\in T$ and $f_{ij}=0$ otherwise. 
For more general $\G$'s and $f_{ij}$'s, the original uniqueness property does not necessarily hold, leading to the non-unique games framework. 
Other important problems that can be cast in the form of \eqref{eq:NUG} include the semidefinite relaxation for the Max-Cut problem over $\Z_2$ \citep{goemans1995improved}, angular synchronization over $\SO{2}$ \citep{singer2011angular}, multi-reference alignment over $\SO{2}$ and $\SO{3}$ \citep{bandeira2014multireference,bandeira2020non}, the little Grothendieck problem over the orthogonal group $\mathrm{O}(d)$ \citep{bandeira2016approximating,naor2013efficient}, and many more.

\subsection{NUG with Symmetry}
A major issue with the original NUG framework is that when symmetry is present, all $|S|$ rotations along the orbit $\{g_is\}_{s\in S}$ result in the same image $I_i$, and hence the optimization problem \eqref{eq:NUG linear obj} has multiple global minima.
Linearity of \eqref{eq:NUG linear obj} then implies that any convex combination of the solutions is again a solution, so that it is hopeless to recover the orientations $g_i$'s from such an arbitrary solution $X$ to \eqref{eq:NUG linear obj}. 
For this reason, it is imperative to extend the NUG framework to address symmetric structures.

To derive the appropriate objective function for optimization, we notice that each pair of images now share $|S|$ lines in common due to symmetry. 
Therefore, we can regard the $i$-th image as $|S|$ slices, and the $j$-th image as $|S|$ slices, so each pair $(i,j)$ contributes $|S|^2$ terms to the objective function of the form 
\begin{align*}
    \sum_{s,s'\in S} f_{ij}\big((g_is)(g_js')^{-1}\big) =\sum_{s,s'\in S} f_{ij}\big(g_i(s{s'}^{-1}) g_j^{-1}\big) = |S|\sum_{s\in S} f_{ij}(g_isg_j^{-1}),
\end{align*}
where we used that $S$ is a subgroup.  
This then leads after a normalization to our NUG problem with symmetry 
\begin{align}\label{eq:NUG SO3}
    \underset{g_1,\ldots,g_n\in \G}{\operatorname{min}}\,\, \frac{1}{|S|}\sum_{i,j=1}^n \sum_{s\in S} f_{ij}(g_i s g_j^{-1}),
\end{align}
with $\G=\SO{3}$. 
It can be seen that the objective function \eqref{eq:NUG SO3} is constant along orbits and we can only hope to recover orbits, not individual rotations. 
We remark that the formulation is indeed invariant to a global rotation and reflection (see Section \ref{sec:invariance}), which may not be obvious at first glance. 
Again, the formulation \eqref{eq:NUG SO3} extends beyond $\SO{3}$ to general compact groups $\G$.

\subsection{Contributions}
In this paper we introduce an SDP solution to \eqref{eq:NUG SO3} for general compact groups $\G$ by extending the framework proposed in \cite{bandeira2020non}. 
The new SDP relaxation is able to give a better computational complexity than the original one as symmetry can annihilate certain degrees in the irreducible representations of the group elements, leading to an SDP of smaller size.
We focus on the orientation estimation problem for symmetric molecules in cryo-EM, although our framework also covers other applications such as multi-reference alignment with symmetry. 
The resulting algorithm returns accurate estimates of the orientations when moderate noise is present, and is shown to be effective in generating high-quality ab initio models from only a few class averages computed from experimental images of symmetric molecules.

\subsection{Related Works}
Few papers in the literature rigorously address the orientation estimation problem in the presence of symmetry.
Here we mention the line of works \cite{pragier2019common,rosen2020common,geva2023common} which perform case-by-case studies to different symmetry groups covering cyclic, dihedral, octahedral and tetrahedral, and are inspired by \cite{singer2010detecting,singer2011viewing,singer2011three,shkolnisky2012viewing} in the non-symmetric case. 
Our work presents a unified mathematical framework that simultaneously covers all the discrete symmetries in $\SO{3}$.

The rest of the paper is organized as follows. 
In Section \ref{sec:SDP}, we introduce a semidefinite programming solution to \eqref{eq: NUG symmetry}. 
In Section \ref{sec:euler angle recovery}, we discuss how to recover the Euler angles from the SDP solution.
Section \ref{sec:implementation} discusses practical considerations for implementation, followed by numerical experiments on both simulated and real datasets in Section \ref{sec:numerics}.

\section{An SDP Formulation}\label{sec:SDP}
In this section we shall introduce an SDP framework for approximating the solution to \eqref{eq:NUG SO3}, repeated below in a more formal way for general compact groups.  
Let $\G$ be a compact group and $S\subset \G$ a finite subgroup of symmetry elements. 
Consider the optimization problem
\begin{align}
    \underset{g_1,\ldots,g_n\in \G}{\operatorname{min}}\,\, \frac{1}{|S|}\sum_{i,j=1}^n \sum_{s\in S} f_{ij}(g_i s g_j^{-1}),  \label{eq: NUG symmetry}
\end{align}
where $f_{ij}:\G\rightarrow \R$ is a loss function. 
The road map is to exploit representation theory to first turn \eqref{eq: NUG symmetry} into a matrix optimization, followed by an SDP relaxation that yields a convex program summarized in \eqref{eq:general sdp framework}.

\subsection{Linearization via Fourier Expansion}
To start with, we shall first linearize the objective function via Fourier expansion over the compact group $\G$. 
Let $\{\rho_k\}_{k=1}^\infty$ be unitary irreducible representations of $\G$ whose dimensions are $d_k$'s \citep{fulton2013representation}, i.e., each $\rho_k:\G\rightarrow U(d_k)$ is a homomorphism into the space of unitary matrices of dimension $d_k$. 
We can expand each $f_{ij}$ in terms of a Fourier series
\begin{align}
    \frac{1}{|S|}\sum_{s\in S}f_{ij}(g_i s g_j^{-1})
    & = \frac{1}{|S|}\sum_{s\in S} \sum_{k=0}^\infty d_k \Tr\big[\widehat{f}_{ij}(k)\rho_k (g_i s g_j^{-1})\big] \nonumber \\
    & =\frac{1}{|S|} \sum_{s\in S} \sum_{k=0}^\infty d_k \Tr\big[\widehat{f}_{ij}(k) \rho_k (g_i)\rho_k(s)\rho_k(g_j)^*\big] \nonumber \\
    & =\sum_{k=0}^\infty \Tr\Big(d_k\widehat{f}_{ij}(k) \rho_k(g_i)\Big[\frac{1}{|S|}\sum_{s\in S}\rho_k(s)\Big] \rho_k(g_j)^*\Big) \label{eq: linear-FE},
\end{align}
where  
\begin{align*}
    \widehat{f}_{ij}(k) = \int_\G f_{ij}(g) \rho_k(g)^*dg
\end{align*}
is the Fourier coefficient on $\G$ with $dg$ the Haar measure on $\G$.
We note that the matrix 
\begin{align}\label{eq:general framework Ak}
    A_k:=\frac{1}{|S|} \sum_{s\in S} \rho_k(s) 
\end{align}
plays a crucial role in our proposed methodology and enjoys some nice properties as shown in the next result that will lead to our SDP framework.
\begin{lemma}\label{lemma:properties of Ak}
The matrix $A_k$ is a projection matrix, i.e., $A_k^*=A_k$ and $A_k^2=A_k$. 
\end{lemma}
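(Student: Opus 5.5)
The plan is to verify the two identities directly from the definition \eqref{eq:general framework Ak}, using only two facts: $S$ is a finite subgroup of $\G$, and each $\rho_k$ is a unitary representation, i.e.\ a homomorphism into $U(d_k)$.

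\emph{Self-adjointness.} I would start from $\rho_k(s)^*=\rho_k(s)^{-1}=\rho_k(s^{-1})$, which holds by unitarity and the homomorphism property. This gives
\[
A_k^*=\frac{1}{|S|}\sum_{s\in S}\rho_k(s^{-1}).
\]
Since $S$ is a subgroup, the map $s\mapsto s^{-1}$ is a bijection of $S$ onto itself. Reindexing the sum therefore returns $A_k$.

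\emph{Idempotence.} I would expand the product as a double sum:
\[
A_k^2=\frac{1}{|S|^2}\sum_{s,t\in S}\rho_k(s)\rho_k(t)=\frac{1}{|S|^2}\sum_{s\in S}\sum_{t\in S}\rho_k(st).
\]
For each fixed $s$, left multiplication $t\mapsto st$ is a bijection of $S$, because $S$ is closed under products and inverses. So the inner sum equals $\sum_{u\in S}\rho_k(u)=|S|A_k$. Summing over the $|S|$ values of $s$ then gives $A_k^2=A_k$.

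No step here is a real obstacle. The only point that needs care is that both reindexings rely on $S$ being a subgroup and not merely a finite subset; this is the same fact the paper already used when simplifying the double sum over $s,s'$ in the introduction. Finally, combining $A_k^*=A_k$ with $A_k^2=A_k$ shows that $A_k$ is an orthogonal projection, namely onto the subspace of vectors fixed by $\rho_k(S)$.
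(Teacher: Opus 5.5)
Your proof is correct and matches the paper's argument: self-adjointness via $\rho_k(s)^*=\rho_k(s^{-1})$ and the reindexing $s\mapsto s^{-1}$, and idempotence by expanding $A_k^2$ as a double sum and reindexing $t\mapsto st$. You also state explicitly the bijections that the paper leaves implicit, and your closing remark that $A_k$ projects onto the $\rho_k(S)$-fixed subspace is correct but not needed for the lemma.
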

\begin{proof}
Note that 
\begin{align*}
    A_k^* = \frac{1}{|S|}\sum_{s\in S} \rho_k(s)^*  = \frac{1}{|S|}\sum_{s\in S} \rho_k(s^{-1}) = \frac{1}{|S|}\sum_{s\in S} \rho_k(s) = A_k,
\end{align*}
and 
\begin{align*}
    A_k^2 = \frac{1}{|S|^2} \sum_{s\in S}\sum_{t\in S} \rho_k(s)\rho_k(t)=\frac{1}{|S|^2} \sum_{s\in S}\sum_{t\in S} \rho_k(st) =  \frac{1}{|S|}\sum_{s\in S} \rho_k(s) = A_k. 
\end{align*}
\end{proof}

In the non-symmetric case where $S=\{e\}$ is the trivial group, we see that $A_k$ equals the identity matrix. 
However, for a nontrivial symmetry group $S$, the $A_k$'s are in general rank deficient. 
This constitutes as the major technical hurdle in formulating an SDP relaxation to the linearized objective \eqref{eq: linear-FE}, which through the definition of $A_k$ can be recast to
\begin{align*}
    \frac{1}{|S|}\sum_{s\in S}f_{ij}(g_isg_j^{-1}) = \sum_{k=0}^\infty \Tr \Big( d_k\widehat{f}_{ij}(k) \big[\rho_k(g_i) A_k\big] \big[A_k^* \rho_k(g_j)^*\big]\Big).
\end{align*}
By introducing the variables 
\begin{align} \label{eq: Ck and X^k}
    C^{(k)} = d_k
    \begin{bmatrix}
        \widehat{f}_{11}(k) & \widehat{f}_{21}(k) & \cdots & \widehat{f}_{n1}(k) \\
        \widehat{f}_{12}(k) & \widehat{f}_{22}(k) & \cdots & \widehat{f}_{n2}(k) \\
        \vdots & \vdots & \ddots & \vdots \\
        \widehat{f}_{1n}(k) & \widehat{f}_{2n}(k) & \cdots & \widehat{f}_{nn}(k)
    \end{bmatrix},\qquad X^{(k)} = 
    \begin{bmatrix}
        \rho_k(g_1)A_k \\
        \vdots \\
        \rho_k(g_n)A_k
    \end{bmatrix}
    \begin{bmatrix}
        \rho_k(g_1)A_k \\
        \vdots \\
        \rho_k(g_n)A_k
    \end{bmatrix}^*,
\end{align}
we can write the objective function in \eqref{eq: NUG symmetry} as 
\begin{align}
    \frac{1}{|S|}\sum_{i,j=1}^n\sum_{s\in S} f_{ij}(g_isg_j^{-1}) &= \sum_{k=0}^\infty \sum_{i,j=1}^n \Tr\big[d_k\widehat{f}_{ij}(k) X^{(k)}_{ij}\big] \nonumber\\
    & = \sum_{k=0}^\infty \Tr \Big[\sum_{i,j=1}^n d_k\widehat{f}_{ij}(k)X_{ij}^{(k)} \Big] = \sum_{k=0}^\infty \Tr \Big( \sum_{i=1}^n [C^{(k)}X^{(k)}]_{ii}\Big), \label{eq:general framework trace}
\end{align}
where $[C^{(k)}X^{(k)}]_{ii}$ is the $ii$-th block of $C^{(k)}X^{(k)}$.
Now using the fact that the trace of a block matrix is the sum of traces of its diagonal blocks, we can further write 
\begin{align*}
    \frac{1}{|S|}\sum_{i,j=1}^n\sum_{s\in S} f_{ij}(g_isg_j^{-1}) = \sum_{k=0}^\infty  \sum_{i=1}^n\Tr\big([C^{(k)}X^{(k)}]_{ii}\big) = \sum_{k=0}^\infty \Tr\big(C^{(k)}X^{(k)}\big).
\end{align*}
Thus, we arrive at the following optimization problem
\begin{equation}\label{eq:general framework X=rho constraint}
\begin{aligned}
    \underset{X^{(k)}}{\operatorname{min}}\quad  &\sum_{k=0}^\infty \Tr\big(C^{(k)}X^{(k)}\big)\\
    \text{subject to } \quad 
    & X^{(k)}_{ij}= \rho_k(g_i) A_k \rho_k(g_j)^*. 
\end{aligned}
\end{equation}

\subsection{Convex Relaxation}

The optimization problem \eqref{eq:general framework X=rho constraint} now has a linear objective function, but the constraints are still highly nonlinear and nonconvex. 
In this subsection, we shall consider convex relaxations of \eqref{eq:general framework X=rho constraint}.

\subsubsection{Constraints on Diagonal Blocks}
Notice that for the diagonal blocks, we have $X_{ii}^{(k)}=\rho_k(g_i)A_k \rho_k(g_i)^*$ so that each $X_{ii}^{(k)}$ is unitarily equivalent to $A_k$. 
Since $A_k$ is a projection matrix by Lemma \ref{lemma:properties of Ak}, we shall impose the following constraints 
\begin{equation}\label{eq:diagonal constraints}
\begin{aligned}
    0  \preceq \,&X_{ii}^{(k)}  \preceq I_{d_k}\\
    \Tr [X_{ii}^{(k)}] = &\Tr[A_k]=\operatorname{rank}(A_k)=:r_k.
\end{aligned}
\end{equation}
We mention that this defines the so-called rank $r_k$ \emph{Fantope} \citep{dattorro2005convex}, namely the convex hull of all rank-$r_k$ orthogonal projectors, which is closely related to classical spectral relaxations.

\subsubsection{Constraints Coupling Different Degrees}
To couple the matrices for different $k$'s, we shall exploit the Dirac delta function on the identity element of $\G$, denoted by $\delta$.
More precisely, we have the following Fourier series expansion 
\begin{align}\label{eq:general framework Fourier expansion for delta}
    \delta(g)=\sum_{k=0}^\infty d_k\Tr\big[\widehat{\delta}(k)\rho_k(g)\big] = \sum_{k=0}^\infty d_k\Tr\Big[\Big(\int_\G \delta(h)\rho_k(h)^*\,dh\Big)\rho_k(g)\Big]=\sum_{k=0}^\infty d_k \Tr\big[\rho_k(g)\big]. 
\end{align}
Replacing $g$ by $g^{-1}(g_isg_j^{-1})$ and averaging over $S$, we obtain 
\begin{align}
    \frac{1}{|S|} \sum_{s\in S} \delta(g^{-1}g_isg_j^{-1}) &= \frac{1}{|S|} \sum_{s\in S}\sum_{k=0}^\infty d_k \Tr \big[\rho_k(g)^*\rho_k(g_i)\rho_k(s)\rho_k(g_j)^*\big]\nonumber\\
    &=\sum_{k=0}^\infty d_k\Tr \big[\rho_k(g)^* \rho_k(g_i)A_k \rho_k(g_j)^*\big]\overset{\eqref{eq: Ck and X^k}}{=}\sum_{k=0}^\infty d_k\Tr \big[\rho_k(g)^* X_{ij}^{(k)}\big], \quad \forall g\in\G.  \label{eq:delta function constraint}
\end{align}
Therefore, a natural convex relaxation for the above equation is that 
\begin{align*}
    \sum_{k=0}^\infty d_k\Tr \big[\rho_k(g)^* X_{ij}^{(k)}\big] \geq 0, \quad \forall g\in\G,\\
    \int_\G \Big(\sum_{k=0}^\infty d_k\Tr \big[\rho_k(g)^* X_{ij}^{(k)}\big]\Big) dg=1.
\end{align*}
Noticing that $\int_\G \rho_k(g)dg=0$ for any nontrivial irreducible representation, implying that 
\begin{align*}
    \int_\G \Tr \big[\rho_k(g)^* X_{ij}^{(k)}\big]dg= \Tr\Big[\Big(\int_\G \rho_k(g)dg \Big)^*X_{ij}^{(k)}\Big] = 0, \quad \forall k\neq 0.
\end{align*}
Therefore the second constraint above can be simplified as (by noting $d_0=1$)
\begin{align}\label{eq:Xij0=1}
    X_{ij}^{(0)}=1, \quad \forall i,j
\end{align}
so that we arrive at the relaxation 
\begin{equation}\label{eq:inequality constraints}
\begin{aligned}
1+\sum_{k=1}^\infty d_k\Tr \big[\rho_k(g)^* X_{ij}^{(k)}\big] &\geq 0, \quad \forall g\in\G. 
\end{aligned}
\end{equation}

\subsubsection{Constraints on $X^{(1)}$} \label{subsec: so3convexhull}
When the group $\G=\SO{3}$, which is the major interest in cryo-EM, there is an additional structure of $X^{(1)}$ that we can further exploit.  
Note that for $g\in \SO{3}$ represented in terms of  ZYZ-Euler angles $(\alpha,\beta,\gamma)$, we have $\rho_1(g)=W^{(1)}(\alpha,\beta,\gamma)$ and the following identity relating Wigner-D matrices of order 1 and rotation matrices:
\begin{align*}
    M^*U^*W^{(1)}(\alpha,\beta,\gamma)UM=R,
\end{align*}
where 
\begin{align*}
    M=\begin{bmatrix}
        0 & -1 & 0 \\
        0 & 0 & 1  \\
        1 & 0 & 0
    \end{bmatrix},\qquad 
    U=\begin{bmatrix}
    -\frac{\i}{\sqrt{2}} & 0 & \frac{1}{\sqrt{2}} \\
    0 & 1 & 0 \\
    -\frac{\i}{\sqrt{2}} & 0 & -\frac{1}{\sqrt{2}}
    \end{bmatrix},
\end{align*}
and $R$ is the rotation matrix corresponding to the ZYZ-Euler angles $(\alpha,\beta,\gamma)$.
Therefore, we have 
\begin{align*}
    M^*U^*X_{ij}^{(1)} UM =  M^*U^*\left[\rho_1(g_i)A_1\rho_1(g_j)\right]UM=\frac{1}{|S|}\sum_{s\in S}M^*U^* \rho_1(g_isg_j)UM \in \operatorname{conv}(\SO{3}),
\end{align*}
the convex hull of $\SO{3}$. 
Theorem 1.3 of \cite{saunderson2015semidefinite} suggests relaxing $R\in \operatorname{conv}(\SO{3})$ by the PSD condition
\begin{align*}
    Q(R):=\begin{bmatrix}
    \begin{smallmatrix}
    1-R_{11}-R_{22}+R_{33} & R_{13}+R_{31} & R_{12}-R_{21} & R_{23}+R_{32} \\
    R_{13}+R_{31} & 1+R_{11}-R_{22}-R_{33} & R_{23}-R_{32} & R_{12}+R_{21} \\
    R_{12}-R_{21} & R_{23}-R_{32} & 1+R_{11}+R_{22}+R_{33} & R_{31}-R_{13} \\
    R_{23}+R_{32} & R_{12}+R_{21} & R_{31}-R_{13} & 1-R_{11}+R_{22}-R_{33}
    \end{smallmatrix}
    \end{bmatrix}
    \succeq 0,
\end{align*}
leading to the relaxation 
\begin{align}\label{eq:quaternion constraint}
    Q\left(M^*U^*X_{ij}^{(1)} UM\right)\succeq 0,\qquad \forall i,j=1,\ldots,n. 
\end{align}
Combining \eqref{eq:diagonal constraints}, \eqref{eq:inequality constraints}, and \eqref{eq:quaternion constraint}, we arrive at our SDP relaxation of \eqref{eq:general framework X=rho constraint}:
\begin{equation}\label{eq:general sdp framework}
\begin{aligned}
    \underset{X^{(k)}}{\operatorname{min}}\quad  &\sum_{k=1}^\infty\Tr\big(C^{(k)}X^{(k)}\big)\\
    \text{subject to } \quad 
    & X^{(k)} \succeq 0, \quad \forall k\\
    & 0  \preceq X_{ii}^{(k)}  \preceq I_{d_k},\quad \forall i,k\\
    & \Tr [X_{ii}^{(k)}] = \Tr[A_k]=\operatorname{rank}(A_k),\quad \forall i,k\\
    & 1+\sum_{k=1}^\infty d_k\Tr \big[\rho_k(g)^* X_{ij}^{(k)}\big] \geq 0, \quad \forall g\in\G,\quad\forall i,j \\
    & Q\left(M^*T^*X_{ij}^{(1)} TM\right)\succeq 0,\qquad \forall i,j.
\end{aligned}
\end{equation}

\subsubsection{Constraints from $J$-Ambiguity}
For cryo-EM applications, there is an additional degree of freedom coming from the handedness ambiguity (a.k.a $J$-ambiguity). 
Precisely, an image with orientation $R$ from a molecule $\phi$ is indistinguishable from the image of orientation $JRJ^*$ from the reflected molecule $J\phi$, where $J$ is reflection across the $z$-axis, i.e.,
\begin{align}\label{eq:reflection matrix}
    J=\begin{bmatrix}
        1 & 0  & 0 \\
        0  & 1 & 0 \\
        0  & 0  & -1
    \end{bmatrix}
\end{align}
Notice that the symmetry group of the reflected molecule is now $JSJ^*$.
This suggests that there is an additional solution to the SDP \eqref{eq:general framework X=rho constraint} that takes the form 
\begin{align*}
    X_{ij}^{J,(k)} &= \rho_k(Jg_iJ^*) \left[\frac{1}{|S|}\sum_{s\in JSJ^*} \rho_k(s)\right] \rho_k(Jg_jJ^*)^* \\
    &= J_k \rho_k(g_i)\left[\frac{1}{|S|}\sum_{s\in S}\rho_k(s)\right] \rho_k(g_j)^*J_k = J_k \rho_k(g_i)A_k\rho_k(g_j)^*J_k,    
\end{align*}
where 
\begin{align*}
    J_k=
    \begin{bmatrix}
    \ddots &    &    &     &   \\
           & -1 &    &     &   \\
           &    & 1  &     &   \\
           &    &    &  -1 &   \\
           &    &    &     &  \ddots
    \end{bmatrix} \in \R^{d_k\times d_k}
\end{align*}
is a diagonal matrix with alternating $\pm 1$ (with the center entry being 1) 
and we have used the fact that $\rho_k(JgJ^*) = J_k \rho_k(g)J_k$ for $g\in \SO{3}$. 
Moreover, any weighted linear combination
\begin{align}\label{eq:general framework X=rho constraint J}
    X_{ij}^{(k)}(t) = t \rho_k(g_i) A_k \rho_k(g_j)^* + (1-t) J_k\rho_k(g_i) A_k \rho_k(g_j)^* J_k,\qquad t\in[0,1]
\end{align}
is a solution to \eqref{eq:general framework X=rho constraint},
We shall then aim for the solution with $t=\frac12$ thanks to the zero patterns in $X_{ij}^{(k)}(\frac12)$. 
For instance, the matrices for $k=1$ and $k=2$ take the form
\begin{align*}
   X_{ij}^{(1)}(\frac12) =
   \begin{bmatrix}
       * & 0 & * \\
       0 & * & 0 \\
       * & 0 & *
   \end{bmatrix},\qquad 
   X_{ij}^{(2)}(\frac12) = 
   \begin{bmatrix}
       * & 0 & * & 0 & * \\
       0 & * & 0 & * & 0 \\
       * & 0 & * & 0 & * \\
       0 & * & 0 & * & 0 \\
       * & 0 & * & 0 & * \\
   \end{bmatrix}.
\end{align*}
On a high level, such zero pattern can be imposed by the condition $Z(X)=0$, where $Z$ is the operator that evaluates the corresponding entries of $X$. 
As in \cite[Section 5.3.1]{bandeira2020non}, such an equality constraint needs not to be imposed explicitly.
It can be achieved by optimizing in \eqref{eq:general sdp framework} a permuted version of $X$ instead, where the PSD constraints on the new $X^{(k)}$'s enforce such zero pattern.

\subsection{Proximal Refinement}\label{sec:proximal}
Observe that feasible points $X^{(k)}$ (and thus optimal solutions) of \eqref{eq:general framework X=rho constraint} are of low rank due to the expression of $X^{(k)}$ in \eqref{eq: Ck and X^k} and the fact that $A_k$ can be quite low-rank in practical applications. %
However, solving the convex relaxation problem in \eqref{eq:general sdp framework} does not necessarily yield low-rank $X^{(k)}$. Inspired by the idea of incorporating nonconvex constraint from \cite[Section 6.5]{diamond2016general}, we enhance the low-rank property of $X^{(k)}$ according to the following procedure. 

First, we shall solve the SDP in \eqref{eq:general sdp framework} to obtain a solution $\overline{X}_1$ containing the blocks $\overline{X}_1^{(k)}$, which are then projected to the set of low-rank matrices (say by requiring ${\rm rank}(\overline{X}_1^{(k)}) = 2\times {\rm rank}(A_k)$). 
The resulting low rank-projected solution is denoted as $\widehat{X}_1$ and consider solving 
\begin{align*}
    \underset{X^{(k)}}{\operatorname{min}}\,\, \langle C,X \rangle + \frac{\eta}{2} \|X - \widehat{X}_1 \|_F^2
\end{align*}
with the same set of constraints \eqref{eq:general sdp framework} on $X^{(k)}$. 
Intuitively, such a new optimization formulation would make the resulting solution $X^{(k)}$ closer to the low-rank matrices $\widehat{X}_1^{(k)}$.
To facilitate the computation using the SDP framework, we simply discard the quadratic term $\frac{\eta}{2} \|X \|_F^2$ following \cite{diamond2016general}. 
Therefore, we shall optimize the new SDP with objective function $\langle C,X \rangle - \eta  \langle X, \widehat{X}_1 \rangle$ to obtain a solution $\overline{X}_2$.
Such a procedure can be repeated with projection ranks decreasing for several times, which we call proximal refinement due to its similarity to proximal algorithms. 
Instead of assigning a uniform penalty $\eta$ for each degree $k$ in $X$, we can further impose a degree-dependent penalty such as $\sum_{k=1}^\infty \eta_k\langle X^{(k)},\widehat{X}_1^{(k)}\rangle$ with different $\eta_k$'s.

\begin{remark}
We remark that one can also perform a local gradient descent search on the objective function $\sum_{k=1}^\infty \Tr\big(C^{(k)}X^{(k)}\big)=:f(g_1,\ldots,g_n)$ by treating it as a function defined over $\SO{3}^n$. 
However, we observe empirically that such a refinement does not improve over the proximal refinement so we shall not pursue it further. 
\end{remark}

\section{Euler Angle Recovery}\label{sec:euler angle recovery}
In this section we discuss how to recover the Euler angles $\{(\alpha_i,\beta_i,\gamma_i)\}_{i=1}^n$ from the $X$ matrix obtained by solving the SDP \eqref{eq:general sdp framework}. 
One possible approach is by least squares, where one searches for the best pairs of Euler angles that would generate an $X$ that is closest to the SDP solution. 
However, this approach is sensitive to the errors in the SDP solution and here we present a robust approach for the case of cyclic and dihedral symmetries by exploiting the special structures of the $A_k$'s.

\subsection{Cyclic $C_m$ Symmetry}
In the case of cyclic $C_m$ symmetry, we can choose an axis of symmetry (the $z$-axis) so that $A_k$ is always a diagonal matrix with entries 0 and 1. 
Here are two observations that we will need.
\begin{observation}\label{obs:Cm-1}
For all $m\geq 2$, $A_1\in \R^{3\times 3}$ takes the form 
\begin{align*}
    \begin{bmatrix}
        0 & 0 & 0\\
        0 & 1 & 0\\
        0 & 0 & 0
    \end{bmatrix}.
\end{align*}
\end{observation}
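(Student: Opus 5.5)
We compute $A_1$ directly from its definition \eqref{eq:general framework Ak}, using the fact that with the $z$-axis as the symmetry axis, $C_m=\{s_\ell\}_{\ell=0}^{m-1}$ consists of the rotations about $z$ by angles $2\pi\ell/m$. In ZYZ-Euler angles, $s_\ell$ corresponds to $(\alpha,\beta,\gamma)=(2\pi\ell/m,0,0)$. We therefore need the Wigner-D matrix $\rho_1(s_\ell)=W^{(1)}(2\pi\ell/m,0,0)$. With the standard convention $W^{(k)}_{m'm}(\alpha,\beta,\gamma)=e^{-\i m'\alpha}\,d^{(k)}_{m'm}(\beta)\,e^{-\i m\gamma}$ (indices $m',m\in\{-1,0,1\}$) and $d^{(1)}(0)=I_3$, this matrix is diagonal:
\begin{align*}
    \rho_1(s_\ell)=\operatorname{diag}\big(e^{2\pi\i\ell/m},\,1,\,e^{-2\pi\i\ell/m}\big).
\end{align*}
The sign convention in the exponent is irrelevant for the argument, since both off-center entries are roots of unity of order exactly $m$.

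Averaging over $\ell$ is done entry by entry. The center entry averages to $1$. Each off-center entry is a geometric sum $\frac1m\sum_{\ell=0}^{m-1}\omega^{\pm\ell}$ with $\omega=e^{2\pi\i/m}$. Since $m\ge 2$, we have $\omega^{\pm1}\neq 1$, so the sum equals $\frac{1-\omega^{\pm m}}{m(1-\omega^{\pm1})}=0$. Hence $A_1=\operatorname{diag}(0,1,0)$, as claimed.

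As a consistency check, this agrees with Lemma \ref{lemma:properties of Ak}: the result is a rank-one projection. Its trace, $1$, also matches the character computation $\frac1m\sum_\ell \big(1+2\cos(2\pi\ell/m)\big)=1$. The only step needing care is fixing the convention so that rotations about $z$ have diagonal Wigner-D matrices with the $m'=0$ entry in the center position. This follows from the ordering $(-1,0,1)$ already used for $J_k$, whose center entry corresponds to $m'=0$. The rest is a routine computation.
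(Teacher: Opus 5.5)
Your argument is correct. The paper states this observation without proof; it only remarks that choosing the $z$-axis as the symmetry axis makes each $\rho_k(s)$ diagonal, so $A_k$ is a diagonal $0/1$ matrix. Your averaging of $\operatorname{diag}(e^{2\pi\i\ell/m},1,e^{-2\pi\i\ell/m})$ over $\ell$, with the off-center geometric sums vanishing because $m\geq 2$, is exactly the verification that remark implies for $k=1$.
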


\begin{observation}\label{obs:Cm-2}
For all $m\geq 2$, $A_m\in\R^{(2m+1)\times (2m+1)}$ takes the form
\begin{align*}
A_m=
    \begin{bmatrix}
        1 &  &  &  & \mathbf{0}\\
        & \mathbf{0}  &        & &\\
          &   &1        & &\\
          &   &        & \mathbf{0} &\\
        \mathbf{0}  &   &        &        & 1
    \end{bmatrix},
\end{align*}
i.e., the top left, the center, and the bottom right entries of $A_m$ equal to 1 and all the other entries are zero. 
\end{observation}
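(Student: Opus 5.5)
The plan is to use the Wigner-D structure of $\rho_m$ on rotations about the $z$-axis. Take the symmetry axis to be the $z$-axis, so that $S=C_m=\{r_\ell\}_{\ell=0}^{m-1}$ with $r_\ell$ the rotation by angle $2\pi\ell/m$ about $z$. In ZYZ-Euler angles $r_\ell$ corresponds to $(\alpha,\beta,\gamma)=(2\pi\ell/m,0,0)$. For such a pure $z$-rotation the Wigner-D matrix of degree $k$ is diagonal, $W^{(k)}(\alpha,0,0)=\operatorname{diag}\big(e^{-\i p\alpha}\big)_{p=-k}^{k}$, with rows and columns indexed by $p=-k,\ldots,k$. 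With the opposite sign convention the argument below is unchanged.

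Averaging over $S$ reduces everything to a geometric sum:
\begin{align*}
(A_m)_{pp}=\frac{1}{m}\sum_{\ell=0}^{m-1} e^{-2\pi\i p\ell/m},
\end{align*}
and all off-diagonal entries vanish. This geometric sum equals $1$ if $m\mid p$ and $0$ otherwise. For the degree $k=m$ we have $p\in\{-m,\ldots,m\}$, and the multiples of $m$ in this range are exactly $p=-m,0,m$. These three indices correspond to the top-left entry, the center entry (position $m+1$), and the bottom-right entry of the $(2m+1)\times(2m+1)$ matrix. This is precisely the claimed form. It also agrees with Lemma~\ref{lemma:properties of Ak}: $A_m$ is a diagonal $0/1$ projection of rank $3$. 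The same computation for $k=1$ gives Observation~\ref{obs:Cm-1}, since for $m\ge2$ only $p=0$ among $\{-1,0,1\}$ is divisible by $m$.

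The only step requiring care is the first one: checking that the representations $\rho_k$ used in the paper really are the Wigner-D matrices in the standard $|k,p\rangle$ basis. This is implicit in the identification $\rho_1(g)=W^{(1)}(\alpha,\beta,\gamma)$ in Section~\ref{subsec: so3convexhull}. We also need that $\beta=0$ gives a diagonal matrix. This follows from the factorization $W^{(k)}(\alpha,\beta,\gamma)=e^{-\i p\alpha}\,d^{(k)}_{pq}(\beta)\,e^{-\i q\gamma}$ together with $d^{(k)}(0)=I$. Once this is in place, the rest is the elementary roots-of-unity identity above, and I expect no real obstacle.
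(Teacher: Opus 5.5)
Your argument is correct and is the computation the paper has in mind. The paper states this observation without proof, after noting that the symmetry axis can be taken to be the $z$-axis (justified by the global-rotation invariance in Appendix~\ref{sec:invariance}); then $\rho_m(r_\ell)=D^{(m)}(2\pi\ell/m)$ is diagonal, and $A_m$ reduces to exactly your roots-of-unity average. The identification you flag as needing care is stated explicitly in the paper for every degree, $\rho_k(g)=D^{(k)}(\alpha)w^{(k)}(\beta)D^{(k)}(\gamma)$ in Section~\ref{sec:euler angle recovery}, and $w^{(k)}(0)=I$ follows directly from the small-$d$ formula in Appendix~\ref{sec:wigner d}.
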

Based on these two observations, we shall recover all the $\alpha_i$'s and $\beta_i$'s using $X^{(1)}$ and recover the $\gamma_i$'s using $X^{(m)}$.
In this case, $X_{ij}^{(1)}$ defined in \eqref{eq:general framework X=rho constraint J} takes the form 
\begin{align*}
    X_{ij}^{(1)} = 
    \begin{bmatrix}
        \frac12 e^{\i \alpha_i -\i \alpha_j} \sin\beta_i\sin\beta_j & 0 &  -\frac12 e^{\i \alpha_i +\i \alpha_j} \sin\beta_i\sin\beta_j\\
        0 & \cos\beta_i\cos\beta_j & 0\\
        -\frac12 e^{-\i \alpha_i -\i \alpha_j} \sin\beta_i\sin\beta_j &0 & \frac12 e^{-\i \alpha_i +\i \alpha_j} \sin\beta_i\sin\beta_j
    \end{bmatrix}.
\end{align*}

\paragraph{Recovering $\beta_i$'s:}
Since $\beta_i\in [0,\pi]$, we have $\sin\beta_i\geq 0$ so that we can extract $\sin\beta_i \sin\beta_j=2|[X_{ij}^{(1)}]_{11}|$. 
Together with $\cos\beta_i\cos\beta_j = [X_{ij}^{(1)}]_{22}$, we can form two matrices
\begin{align*}
    [B_1]_{ij} =  \sin\beta_i \sin\beta_j \,\qquad [B_2]_{ij}=\cos\beta_i\cos\beta_j. 
\end{align*}
Both $B_1$ and $B_2$ have rank 1 and we can recover (up to a global $\pm$ sign) 
\begin{align*}
    b_1=[\sin\beta_1\ldots,\sin\beta_n]^T,\qquad b_2=[\cos\beta_1\ldots,\cos\beta_n]^T
\end{align*}
by computing the leading eigenvectors of $B_1$ and $B_2$.
Then each $\beta_i$ can be recovered via 
\begin{align*}
    \beta_i = \arctan \left(\frac{[b_1]_i}{[b_2]_i}\right).
\end{align*}
Due to the global sign ambiguity, either $\{\beta_i\}_{i=1}^n$ or $\{\pi-\beta_i\}_{i=1}^n$ recovers the true $\beta_i$'s. 

\paragraph{Recovering $\alpha_i$'s:}
By extracting $e^{\i \alpha_i -\i \alpha_j}$ from $[X_{ij}^{(1)}]_{11}$, we can form the matrix 
\begin{align*}
    A_{ij}=e^{\i \alpha_i -\i \alpha_j},
\end{align*}
where the standard angular synchronization approach \citep{singer2011angular} returns estimates $\{e^{\i(\alpha_i+\mu)}\}_{i=1}^n =:\{e^{\i\widetilde{\alpha}_i}\}_{i=1}^n$ for a global shift $\mu$. 
To estimate $\mu$, we form by extracting from $[X_{ij}^{(1)}]_{13}$ another matrix
\begin{align*}
    \widetilde{A}_{ij} = e^{\i \alpha_i +\i \alpha_j}.
\end{align*}
In particular, we should have 
\begin{align*}
    e^{-2\i \mu} 
    \begin{bmatrix}
        e^{\i \widetilde{\alpha}_1} \\
        \vdots \\
        e^{\i \widetilde{\alpha}_n}
    \end{bmatrix}
    \begin{bmatrix}
        e^{\i \widetilde{\alpha}_1} \\
        \vdots \\
        e^{\i \widetilde{\alpha}_n}
    \end{bmatrix}^*
    =\widetilde{A}.
\end{align*}
By solving this equation in the least squares sense
\begin{align*}
    \underset{c\in \C}{\operatorname{min}}\,\, \left\|c\begin{bmatrix}
        e^{\i \widetilde{\alpha}_1} \\
        \vdots \\
        e^{\i \widetilde{\alpha}_n}
    \end{bmatrix}
    \begin{bmatrix}
        e^{\i \widetilde{\alpha}_1} \\
        \vdots \\
        e^{\i \widetilde{\alpha}_n}
    \end{bmatrix}^*-\widetilde{A}\right\|_F
\end{align*}
(see Appendix \ref{sec:least squares type 1} for the analytic solution), we can recover the $\alpha_i$'s from $\sqrt{c}\{e^{\i \widetilde{\alpha}_i}\}_{i=1}^n$. 
Note that here we take any one of the two possible values for $\sqrt{c}$ because $c$ is complex, so we can only recover the $\alpha_i$'s up to $\pi$, i.e., we eventually obtain $\{\alpha_i\}_{i=1}^n$ or $\{\alpha_i+\pi\}_{i=1}^n$. 

\paragraph{Recovering $\gamma_i$'s:}
We shall exploit the structure of $X^{(m)}_{ij}$ to recover the $\gamma_i$'s because they do not show up until in $X^{(m)}$.
Recall that for a rotation $g\in \SO{3}$ corresponding to Euler angles $(\alpha,\beta,\gamma)$, its representation is given by the Wigner-D matrices \citep{Varshalovich1988} that can be decomposed as 
\begin{align*}
    \rho_k(g) = D^{(k)}(\alpha)
    w^{(k)}(\beta)
    D^{(k)}(\gamma), \qquad 
    D^{(k)}(\theta)=
    \begin{bmatrix}
        e^{\i k\theta} & & \\
        & \ddots & \\
        & & e^{-\i k \theta} 
    \end{bmatrix},
\end{align*}
where $w^{(k)}(\beta)$ is the Wigner small d-matrix (see Appendix \ref{sec:wigner d} for more details). 
Now for $t=\frac12$ in \eqref{eq:general framework X=rho constraint J}, we can write the SDP solution as 
\begin{align*}
    X_{ij}^{(m)} & = \frac12 \rho_m(g_i)A_m \rho_m(g_j)^* +\frac12 J_m \rho_m(g_i)A_m \rho_m(g_j)^*  J_m \\
    &= \frac12 [D^{(m)}(\alpha_i) w^{(m)}(\beta_i) D^{(m)}(\gamma_i)] A_m [D^{(m)}(\alpha_j) w^{(m)}(\beta_j) D^{(m)}(\gamma_j)]^*\\
    &\quad +  \frac12 J_m[D^{(m)}(\alpha_i) w^{(m)}(\beta_i) D^{(m)}(\gamma_i)] A_m [D^{(m)}(\alpha_j) w^{(m)}(\beta_j) D^{(m)}(\gamma_j)]^*J_m\\
    & =\frac12 D^{(m)}(\alpha_i) w^{(m)}(\beta_i) [D^{(m)}(\gamma_i)A_m D^{(m)}(\gamma_j)^*]w^{(m)}(\beta_j)^* D^{(m)}(\alpha_j)^* \\
    &\quad +\frac12 J_mD^{(m)}(\alpha_i) w^{(m)}(\beta_i) [D^{(m)}(\gamma_i)A_m D^{(m)}(\gamma_j)^*]w^{(m)}(\beta_j)^* D^{(m)}(\alpha_j)^*J_m\\
    & = \frac12 D^{(m)}(\alpha_i) w^{(m)}(\beta_i) [D^{(m)}(\gamma_i)A_m D^{(m)}(\gamma_j)^*]w^{(m)}(\beta_j)^* D^{(m)}(\alpha_j)^* \\
    &\quad +\frac12 D^{(m)}(\alpha_i)J_m w^{(m)}(\beta_i) [D^{(m)}(\gamma_i)A_m D^{(m)}(\gamma_j)^*]w^{(m)}(\beta_j)^*J_m D^{(m)}(\alpha_j)^*,
\end{align*}
where in the last step we have used the fact that $D^{(m)}(\alpha)$ commutes with $J_m$ since both are diagonal matrices. 
Therefore we have
\begin{equation}\label{eq:DXD}
\begin{aligned}
D^{(m)}(\alpha_i)^* X_{ij}^{(m)} D^{(m)}(\alpha_j)& = \frac12 w^{(m)}(\beta_i)  [D^{(m)}(\gamma_i)A_m D^{(m)}(\gamma_j)^*]w^{(m)}(\beta_j)^*\\
&\quad +\frac12 J_mw^{(m)}(\beta_i)  [D^{(m)}(\gamma_i)A_m D^{(m)}(\gamma_j)^*]w^{(m)}(\beta_j)^*J_m,
\end{aligned}
\end{equation}
where  
\begin{align*}
    D^{(m)}(\gamma_i)A_m D^{(m)}(\gamma_j)^*=
    \begin{bmatrix}
        e^{\i m (\gamma_i-\gamma_j)} & & & &\\
        & \ddots &  & &\\
        & & 1 & & \\
        & & & \ddots & \\
        & & & & e^{-\i m(\gamma_i-\gamma_j)}
    \end{bmatrix}
\end{align*}
is a matrix with only three nonzero entries. 
If we denote $(w_i)_{-m}$, $(w_i)_0$, $(w_i)_m$ as the 1st, $(m+1)$th, and $(2m+1)$th column of $w^{(m)}(\beta_i)$, then we can further write
\begin{align*}
    &\quad  D^{(m)}(\alpha_i)^* X_{ij}^{(m)} D^{(m)}(\alpha_j)\\
    & = \frac12 e^{\i m (\gamma_i-\gamma_j)} \left[(w_i)_{-m}(w_j)_{-m}^*+J_m(w_i)_{-m}(w_j)_{-m}^*J_m\right] + \frac12 \left[(w_i)_{0}(w_j)_{0}^*+J_m(w_i)_{0}(w_j)_{0}^*J_m\right]\\
    &\quad+ \frac12 e^{-\i m (\gamma_i-\gamma_j)} \left[(w_i)_m(w_j)_m^*+J_m(w_i)_m(w_j)_m^*J_m\right]\\
    &=: e^{\i m (\gamma_i-\gamma_j)} W_{-m}+ W_0 + e^{-\i m (\gamma_i-\gamma_j)} W_m,
\end{align*}
where $D^{(m)}(\alpha_i)^* X_{ij}^{(m)} D^{(m)}(\alpha_j)$, $W_{-m}$, $W_0$, $W_m$ are all known matrices. 
We then estimate $e^{\i m (\gamma_i-\gamma_j)}$ for each pair of $i,j$ from the least squares problem (see Appendix \ref{sec:least squares type 2} for details on the analytic solution)
\begin{align*}
    \underset{c\in \mathbb{C}}{\operatorname{min}} \,\, \|c W_{-m}+ W_0 + \bar{c}W_m-D^{(m)}(\alpha_i)^* X_{ij}^{(m)} D^{(m)}(\alpha_j)\|_F. 
\end{align*}
to obtain the matrix $G_{ij}=e^{\i m (\gamma_i-\gamma_j)}$.
Applying synchronization, we obtain $\{e^{\i (m\gamma_i+\gamma)}\}_{i=1}^n$ or equivalently $\{\gamma_i+\frac{\gamma}{m}+\frac{2k_i\pi}{m}\}_{i=1}^n$ for a global shift $\gamma$ and unknown $\{k_i\}_{i=1}^n$ arising from symmetry.

\subsection{Dihedral $D_m$ Symmetry $m\geq 3$}

Similarly as in the cyclic case, we have the following two observations
\begin{observation}\label{obs:Dm-1}
For all $m\geq 3$, $A_2\in \R^{5\times 5}$ takes the form 
\begin{align*}
    \begin{bmatrix}
        0 & 0 & 0 & 0 & 0\\
        0 & 0 & 0 & 0 & 0\\
        0 & 0 & 1 & 0 & 0\\
        0 & 0 & 0 & 0 & 0\\
        0 & 0 & 0 & 0 & 0
    \end{bmatrix}.
\end{align*}
\end{observation}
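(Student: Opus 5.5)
By Lemma \ref{lemma:properties of Ak}, $A_2$ is the orthogonal projection onto the subspace of $\C^5$ fixed by $\rho_2(s)$ for every $s\in D_m$. Indeed, if $A_2v=v$ then $\rho_2(s)v=\rho_2(s)A_2v=A_2v=v$, since $\rho_2(s)A_2=A_2$ by reindexing the sum. Conversely, any $S$-fixed vector is clearly fixed by $A_2$. So the plan is to compute this invariant subspace and show it is spanned by the center basis vector $e_0$, the index-$0$ entry in the ordering $-2,\dots,2$. The claimed matrix is exactly the projection onto $\operatorname{span}(e_0)$.

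Choose coordinates as in the cyclic case. The $m$-fold axis is the $z$-axis, and one of the two-fold axes of $D_m$ is the $y$-axis. Then $D_m$ is generated by the rotation $r$ about $z$ by $2\pi/m$ (Euler angles $(2\pi/m,0,0)$) and the rotation $u$ by $\pi$ about $y$ (Euler angles $(0,\pi,0)$). A vector is $D_m$-invariant if and only if it is fixed by both generators, so we only need $\rho_2(r)$ and $\rho_2(u)$.

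\textbf{Step 1 (the generator $r$).} The matrix $\rho_2(r)=D^{(2)}(2\pi/m)$ is diagonal with entries $e^{\i 2\ell\pi/m}$ for $\ell=-2,\dots,2$ (up to the sign convention). For $m\ge3$ we have $e^{\pm 2\pi\i/m}\neq1$ and $e^{\pm 4\pi\i/m}\neq 1$. Hence the only $r$-fixed direction is $e_0$. This is where $m\ge 3$ is used: for $m=2$ the entries $\ell=\pm2$ would also survive, as in Observation \ref{obs:Cm-2}.

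\textbf{Step 2 (the generator $u$).} Check that $e_0$ is fixed by $\rho_2(u)=w^{(2)}(\pi)$. The Wigner small-d matrix at $\beta=\pi$ is anti-diagonal, $[w^{(2)}(\pi)]_{\ell\ell'}=(-1)^{2+\ell}\delta_{\ell,-\ell'}$ up to convention. Its center entry is $(-1)^{2}=1$, which equals the Legendre value $P_2(\cos\pi)=1$. So $\rho_2(u)e_0=e_0$.

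Together, the invariant subspace is exactly $\operatorname{span}(e_0)$. Since $A_2$ is an orthogonal projection, $A_2=e_0e_0^*$, which is the stated matrix. As a cross-check, the rank formula $\operatorname{rank}A_2=\Tr A_2=\frac1{|S|}\sum_s\chi_2(s)$ should give $1$. Here $\chi_2$ of a rotation by $\theta$ is $1+2\cos\theta+2\cos2\theta$, and each of the $m$ two-fold rotations contributes $\chi_2=1$.

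The main obstacle is keeping conventions consistent: the sign and phase convention of $D^{(2)}$ and $w^{(2)}(\pi)$, and the choice of two-fold axis. A different two-fold axis in the $xy$-plane conjugates $u$ by a $z$-rotation, which again fixes $e_0$, so the result does not depend on this choice. I would verify the center entry through the relation $[w^{(k)}(\beta)]_{00}=P_k(\cos\beta)$, which holds regardless of the phase convention.
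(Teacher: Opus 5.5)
Your proof is correct. The paper gives no proof of this statement: it is recorded as an observation, evidently from evaluating $\frac{1}{2m}\sum_{s\in D_m}\rho_2(s)$ with the $m$-fold axis along $z$ and a two-fold axis along $y$. So your argument supplies a missing justification rather than reproducing one.

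Identifying $\operatorname{ran}A_2$ with the $D_m$-fixed subspace gives you three things:
\begin{itemize}
\item a reduction to checking two generators;
\item a clear account of where $m\geq 3$ enters (for $m=2$ the weights $\pm 2$ survive and $\operatorname{rank}A_2=2$);
\item independence from where the two-fold axes sit in the $xy$-plane.
\end{itemize}

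If you want the explicit average instead, write $D_m=\{r^j,\,r^ju\}_{j=0}^{m-1}$. This gives the factorization $A_k=P_{C_m}\cdot\tfrac12\bigl(I+\rho_k(u)\bigr)$, where $P_{C_m}=\frac1m\sum_j\rho_k(r^j)$ is the diagonal $0/1$ projection onto weights divisible by $m$. For $k=2$ this is $e_0e_0^*\cdot\tfrac12\bigl(I+w^{(2)}(\pi)\bigr)=e_0e_0^*$. That uses exactly the two facts you used: $w^{(2)}(\pi)$ is anti-diagonal, and $[w^{(2)}(\pi)]_{00}=P_2(-1)=1$.

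The same formula with $k=m$ also yields Observation \ref{obs:Dm-2}: corner entries $\tfrac12$ and center entry $\tfrac12(1+(-1)^m)$. There, unlike for $A_2$, the choice of the $y$-axis matters. Rotating the two-fold axes by $\phi$ about $z$ multiplies the two anti-diagonal corner entries by $e^{\pm 2\i m\phi}$.
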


\begin{observation}\label{obs:Dm-2}
For all $m\geq 3$, $A_m\in \R^{(2m+1)\times (2m+1)}$ takes the form 
\begin{align*}
    \begin{bmatrix}
        \frac12 &  & \mathbf{0} &  & \frac12\\
         & \mathbf{0}  & & \mathbf{0} & \\
         \mathbf{0}&  & 0 &  & \mathbf{0}\\
         & \mathbf{0} &  & \mathbf{0} & \\
        \frac12 &  & \mathbf{0} &  & \frac12
    \end{bmatrix}
    \quad \text{when $m$ is odd, or} \quad 
    \begin{bmatrix}
        \frac12 &  & \mathbf{0} &  & \frac12\\
         & \mathbf{0}  & & \mathbf{0} & \\
         \mathbf{0}&  & 1 &  & \mathbf{0}\\
         & \mathbf{0} &  & \mathbf{0} & \\
        \frac12 &  & \mathbf{0} &  & \frac12
    \end{bmatrix}
    \text{when $m$ is even},
\end{align*}
i.e., all entries are zero except the four corner entries being $\frac12$ and the middle entry being 1 when $m$ is even. 
\end{observation}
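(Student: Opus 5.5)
We prove Observation~\ref{obs:Dm-2} by computing $A_m=\frac{1}{|S|}\sum_{s\in S}\rho_m(s)$ directly from the Wigner-D matrices. We fix the standard frame for $D_m$: the $m$-fold axis is the $z$-axis and one of the $2$-fold axes is the $y$-axis. (The frame is a choice, and the diagonal entry $1$ in Observation~\ref{obs:Dm-1} requires the main axis along $z$.) Then $D_m = C_m \cup C_m r$, where $C_m=\{R_z(2\pi \ell/m)\}_{\ell=0}^{m-1}$ and $r=R_y(\pi)$ is the rotation by $\pi$ about the $y$-axis. Hence
\begin{align*}
A_m = \tfrac12\big(A_m^{C}+A_m^{C}\rho_m(r)\big),\qquad A_m^{C}:=\frac1m\sum_{\ell=0}^{m-1}\rho_m\big(R_z(2\pi\ell/m)\big).
\end{align*}

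\textbf{Step 1: the cyclic part.} $\rho_m(R_z(\theta))=D^{(m)}(\theta)$ is diagonal with entries $e^{\i q\theta}$ for $q=m,\dots,-m$. Averaging over $\theta=2\pi\ell/m$ kills every entry except those with $m\mid q$, that is, $q\in\{-m,0,m\}$. This gives exactly Observation~\ref{obs:Cm-2}: $A_m^C=E_{-m}+E_0+E_m$, the diagonal projector onto the corners and the center.

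\textbf{Step 2: the flip.} We need $\rho_m(r)=w^{(m)}(\pi)$, the small Wigner d-matrix at $\beta=\pi$. Its standard closed form (see Appendix~\ref{sec:wigner d}) is
\begin{align*}
d^{(m)}_{q'q}(\pi)=(-1)^{m+q}\delta_{q',-q}
\end{align*}
up to the sign convention used in the appendix. It is therefore an anti-diagonal matrix with signs $\pm1$. Multiplying $A_m^C\rho_m(r)$ keeps only the rows $q'\in\{-m,0,m\}$, and these map to the columns $-q'$. The resulting nonzero entries are $(m,-m)$ and $(-m,m)$, both with sign $(-1)^{2m}=1$, and the center entry $(0,0)$ with sign $(-1)^m$.

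\textbf{Step 3: assemble.} Adding the two parts and halving gives $\tfrac12$ at all four corners. The center entry is $\tfrac12(1+(-1)^m)$, which is $1$ for $m$ even and $0$ for $m$ odd. All other entries are zero. This is exactly the claimed form. As a consistency check, Lemma~\ref{lemma:properties of Ak} requires $A_m$ to be a projection, and indeed the corner block $\tfrac12\begin{bmatrix}1&1\\1&1\end{bmatrix}$ is a rank-one projector.

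\textbf{Main obstacle.} The main obstacle is getting the sign of the corner entries of $w^{(m)}(\pi)$ right under the paper's Wigner convention (Appendix~\ref{sec:wigner d}). The restriction $m\ge3$ also needs to be checked. For $m=2$, the group $D_2$ has three equivalent $2$-fold axes, and it should be verified that the general decomposition still holds. The claim only requires $m\ge3$, so the case $m=2$ can simply be excluded. If the convention yields $(-1)^{m-q}$ or places the flip about the $x$-axis, the corner entries pick up a phase $e^{\pm 2\i m\phi}$ for the flip axis at angle $\phi$. I would then choose the flip axis so that this phase is $1$, which is consistent with the paper choosing a convenient frame in the cyclic case.
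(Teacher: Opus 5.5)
Your proof is correct. The paper states Observation~\ref{obs:Dm-2} without proof. It relies implicitly on the freedom, noted in Appendix~\ref{sec:invariance}, to choose a convenient frame.

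Your argument supplies exactly the missing computation: you split $D_m=C_m\cup C_m r$, reuse the cyclic average from Observation~\ref{obs:Cm-2}, and read off $w^{(m)}(\pi)$. It also makes explicit the frame in which the stated form holds, and that matters, as you anticipated. If the $2$-fold axis sits at azimuth $\phi$ from the $y$-axis, the anti-diagonal corners become $\tfrac12 e^{\pm 2\mathbf{i} m\phi}$. For the $x$-axis this is $(-1)^m/2$, so the claimed form fails for odd $m$.

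Two of your hedges can be dropped.

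\begin{itemize}
\item \emph{The sign of $w^{(m)}(\pi)$.} Set $\beta=\pi$ in the appendix formula, with degree $m$, row $q'$, column $q$. Only the term with vanishing power of $\cos(\beta/2)$ survives, and the summation limits force $q'=-q$ and $s=m+q$. This gives $[w^{(m)}(\pi)]_{q'q}=(-1)^{m-q}\delta_{q',-q}$, which equals your $(-1)^{m+q}\delta_{q',-q}$. In any case the corner entries are $(-1)^0$ and $(-1)^{2m}$ under any of the usual conventions; only the center sign $(-1)^m$ carries information.
\item \emph{The hypothesis $m\ge 3$.} It plays no role in this observation, and your computation is valid verbatim for $m=2$. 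The restriction is needed for Observation~\ref{obs:Dm-1}. For $m=2$ the cyclic average at degree $2$ retains $q=\pm 2$, so $A_2$ takes the form of Observation~\ref{obs:Dm-2} rather than having a single central entry $1$.
\end{itemize}
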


Based on these two observations, we shall recover all the $\alpha_i$'s and $\beta_i$'s using $X^{(2)}$ and recover the $\gamma_i$'s using $X^{(m)}$. 
Notice that $X_{ij}^{(2)}$ takes the form 
\begin{align*}
    X_{ij}^{(2)}=
    \begin{bmatrix}
\begin{smallmatrix}
        \frac38 S_i^2S_j^2A_i^2\bar{A}_j^2 & 0 & \frac{\sqrt{6}}{8}S_i^2 (3C_j^2-1) A_i^2& 0 & \frac38 S_i^2S_j^2A_i^2A_j^2\\
        0 & \frac32 S_iC_iS_jC_jA_i\bar{A}_j & 0 & -\frac32 S_iC_iS_jC_jA_iA_j & 0\\
        \frac{\sqrt{6}}{8} (3C_i^2-1)S_i\bar{A}_j & 0 & \frac14(3C_i^2-1)(3C_j^2-1) & 0 & \frac{\sqrt{6}}{8}(3C_i^2-1)S_j^2A_j\\
        0 & -\frac32 S_iC_iS_jC_j \bar{A}_i \bar{A}_j & 0& \frac32 S_iC_iS_jC_j\bar{A}_iA_j & 0 \\
        \frac38 S_i^2S_j^2\bar{A}_i^2 \bar{A}_j^2 & 0 & \frac{\sqrt{6}}{8}S_i^2(3C_j^2-1)\bar{A}_i^2 & 0 & \frac38 S_i^2S_j^2 \bar{A}_i^2A_j^2
        \end{smallmatrix}
    \end{bmatrix},
\end{align*}
where $S_i=\sin\beta_i$, $C_i=\cos\beta_i$, and $A_i=e^{\i\alpha_i}$.

\paragraph{Recovering $\beta_i$'s:}
To recover the $\beta_i$'s, we shall exploit the entries
\begin{align*}
    \left[X_{ij}^{(2)}\right]_{11} = \frac38\sin^2\beta_i\sin^2\beta_je^{2\i(\alpha_i-\alpha_j)},\quad \left[X_{ij}^{(2)}\right]_{33} = \frac14(3\cos^2\beta_i-1)(3\cos^2\beta_j-1).
\end{align*}
In particular we have the following identity
\begin{align*}
    \cos^2\beta_i\cos^2\beta_j &= \frac23\left[\frac14(3\cos^2\beta_i-1)(3\cos^2\beta_j-1)-\frac34\sin^2\beta_i \sin^2\beta_j+\frac12\right]\\
    &= \frac23 \left[\left[X_{ij}^{(2)}\right]_{33} -2\left|\left[X_{ij}^{(2)}\right]_{11}\right|+\frac12\right].
\end{align*}
Forming the matrix
\begin{align*}
    [B]_{ij}= \cos^2\beta_i \cos^2\beta_j, 
\end{align*}
we can recover $[\cos^2\beta_1,\ldots,\cos^2\beta_n]^T$ as $\sqrt{\lambda_1}v_1$, where $(\lambda_1,v_1)$ is the leading eigenpair of $B$.  
Note that there is no sign ambiguity in this step since $[\cos^2\beta_1,\ldots,\cos^2\beta_n]^T$ is supposed to be nonnegative. 
This then allows us to recover each $\pm \cos\beta_i$ and hence $\beta_i$ or $\pi-\beta_i$. 
\begin{remark}
Unlike in the cyclic case where we can recover all $\{\beta_i\}_{i=1}^n$ or $\{\pi-\beta_i\}_{i=1}^n$, here we can only recover for each $i$ either $\beta_i$ or $\pi-\beta_i$. 
This is the additional inherent ambiguity in dihedral symmetry.    
\end{remark}

\paragraph{Recovering $\alpha_i$'s:}
After obtaining the estimate of $\beta_i$'s, we shall exploit the entries 
\begin{align*}
    \left[X_{ij}^{(2)}\right]_{11}=\frac32 \sin\beta_i\cos\beta_i\sin\beta_j\cos\beta_j e^{\i (\alpha_i+\alpha_j)} = \frac{3}{8}\sin(2\beta_i)\sin(2\beta_j)e^{\i (\alpha_i-\alpha_j)},\\
    \left[X_{ij}^{(2)}\right]_{13}=-\frac32 \sin\beta_i\cos\beta_i\sin\beta_j\cos\beta_j e^{\i (\alpha_i+\alpha_j)} = -\frac{3}{8}\sin(2\beta_i)\sin(2\beta_j)e^{\i (\alpha_i+\alpha_j)}.
\end{align*}
In particular, we have 
\begin{align*}
    \frac83\cdot\frac{\left[X_{ij}^{(2)}\right]_{11}}{\sin(2\beta_i)\sin(2\beta_j)}=e^{\i (\alpha_i-\alpha_j)},\quad 
    -\frac83\cdot\frac{\left[X_{ij}^{(2)}\right]_{13}}{\sin(2\beta_i)\sin(2\beta_j)}=e^{\i (\alpha_i+\alpha_j)}. 
\end{align*}
The rest follows as in the cyclic case. %

\paragraph{Recovering $\gamma_i$'s:}

As in \eqref{eq:DXD}, we have 
\begin{align*}
D^{(m)}(\alpha_i)^* X_{ij}^{(m)} D^{(m)}(\alpha_j)& = \frac12 w^{(m)}(\beta_i)  [D^{(m)}(\gamma_i)A_m D^{(m)}(\gamma_j)^*]w^{(m)}(\beta_j)^*\\
&\quad +\frac12 J_mw^{(m)}(\beta_i)  [D^{(m)}(\gamma_i)A_m D^{(m)}(\gamma_j)^*]w^{(m)}(\beta_j)^*J_m,
\end{align*}
where using Observation \ref{obs:Dm-2}
\begin{align*}
    D^{(m)}(\gamma_i)A_m D^{(m)}(\gamma_j)^*&=
    \begin{bmatrix}
        \frac12e^{\i m (\gamma_i-\gamma_j)} & & & & \frac12 e^{\i m (\gamma_i+\gamma_j)}\\
        & \ddots &  & &\\
        & & 0\text{ or } 1& & \\
        & & & \ddots & \\
        \frac12e^{-\i m(\gamma_i+\gamma_j)}& & & & \frac12e^{-\i m(\gamma_i-\gamma_j)}
    \end{bmatrix}
    \\
    &=:
    \begin{bmatrix}
        \frac12G^-_{ij} & & & & \frac12 G^+_{ij}\\
        & \ddots &  & &\\
        & & c_0& & \\
        & & & \ddots & \\
        \frac12\overline{G^+_{ij}}& & & & \frac12\overline{G^-_{ij}}
    \end{bmatrix}.
\end{align*}
If we denote $(w_i)_{-m}$, $(w_i)_0$, $(w_i)_m$ as the 1st, $(m+1)$th, and $(2m+1)$th column of $w^{(m)}(\beta_i)$, then we can further write
\begin{align*}
    & \quad D^{(m)}(\alpha_i)^* X_{ij}^{(m)} D^{(m)}(\alpha_j)\\
    & = \frac14 G^-_{ij}\left[(w_i)_{-m}(w_j)_{-m}^*+J_m(w_i)_{-m}(w_j)_{-m}^*J_m\right] + \frac14 \overline{G^+_{ij}}\left[(w_i)_{m}(w_j)_{-m}^*+J_m(w_i)_{m}(w_j)_{-m}^*J_m\right] \\
    &\quad + \frac14c_0\left[(w_i)_{0}(w_j)_{0}^*+J_m(w_i)_{0}(w_j)_{0}^*J_m\right]+ \frac14 G^+_{ij} \left[(w_i)_{-m}(w_j)_m^*+J_m(w_i)_{-m}(w_j)_m^*J_m\right]\\
    &\quad + \frac14 \overline{G^-_{ij}}\left[(w_i)_m(w_j)_m^*+J_m(w_i)_m(w_j)_m^*J_m\right]\\
    &=: G^-_{ij} W_{-m,-m}+ \overline{G^+_{ij}} W_{m,-m} + W_0 + G^+_{ij} W_{-m,m} +  \overline{G^-_{ij}} W_{m,m},
\end{align*}
where $D^{(m)}(\alpha_i)^* X_{ij}^{(m)} D^{(m)}(\alpha_j)$, $W_{-m,-m},W_{m,-m},W_0,W_{-m,m},W_{m,m}$ are all known matrices. 
We then estimate $G^-_{ij},G^+_{ij}$ for each pair of $i,j$ from the least squares problem (see Appendix \ref{sec:least squares type 3} for details on the analytic solution)
\begin{align*}
    \underset{G^-_{ij},G^+_{ij}\in \mathbb{C}}{\operatorname{min}} \,\, \left\|G^-_{ij} W_{-m,-m}+ \overline{G^+_{ij}} W_{m,-m} + W_0 + G^+_{ij} W_{-m,m} +  \overline{G^-_{ij}} W_{m,m}-D^{(m)}(\alpha_i)^* X_{ij}^{(m)} D^{(m)}(\alpha_j)\right\|_F. 
\end{align*}
to obtain the matrix $G^-_{ij}=e^{\i m (\gamma_i-\gamma_j)}$.
Applying synchronization, we obtain $\{e^{\i (m\gamma_i+\gamma)}\}_{i=1}^n$ or equivalently $\{\gamma_i+\frac{\gamma}{m}+\frac{2k_i\pi}{m}\}_{i=1}^n$ for a global shift $\gamma$ and unknown $\{k_i\}_{i=1}^n$ arising from symmetry.

\section{Practical Implementation}\label{sec:implementation}

In this section, we shall present details for practical implementation regarding several ingredients of our proposed approach. 

\subsection{Precise Form of $f_{ij}$ for Orientation Estimation}\label{sec:cryoem fij}
For the orientation estimation problem in cryo-EM, $f_{ij}$ is defined as follows. 
Let $\widehat{I}_i(r,\theta)$ and $\widehat{I}_i(r,\theta)$ be the Fourier transforms of the images $I_i$ and $I_j$ represented in polar coordinates. 
If the orientations for $I_i$ and $I_j$ are $g_i$ and $g_j$ respectively, then the polar angles $\alpha_{ij}$ and $\alpha_{ji}$ within $\widehat{I}_i(r,\theta)$ and $\widehat{I}_i(r,\theta)$ that correspond to the common line satisfy \citep{singer2011viewing}
\begin{align*}
    g_i^{-1}(\cos\alpha_{ij},\sin\alpha_{ij},0)^T =  g_j^{-1}(\cos\alpha_{ji},\sin\alpha_{ji},0)^T = \frac{(g_i^{-1}e_3)\times (g_j^{-1}e_3)}{|(g_i^{-1}e_3)\times (g_j^{-1}e_3)|},
\end{align*}
where $e_3$ is the standard basis $(0,0,1)^T$, $g_i^{-1}e_3$ denotes the third column of $g_i^{-1}$, and $|\cdot|$ denotes the Euclidean norm. 
Using the fact that $R(a\times b) =Ra \times Rb$ for $R\in \SO{3}$, we have 
\begin{align*}
    (\cos\alpha_{ij},\sin\alpha_{ij},0)^T &= \frac{e_3\times g_ig_j^{-1}e_3}{|e_3\times g_ig_j^{-1}e_3|}\\
    (\cos\alpha_{ji},\sin\alpha_{ji},0)^T &= \frac{(g_ig_j^{-1})^{-1}e_3\times e_3}{|(g_ig_j^{-1})^{-1}e_3\times e_3|},
\end{align*}
so that both $\alpha_{ij}$ and $\alpha_{ji}$ are only functions of the relative orientation $g_ig_j^{-1}$. 
The objective function $f_{ij}$ is then defined as 
\begin{align}\label{eq:fij}
    f_{ij}(g_ig_j^{-1}) = \|\widehat{I}_i(\cdot,\alpha_{ij}(g_ig_j^{-1})) - \widehat{I}_j (\cdot, \alpha_{ji}(g_ig_j^{-1}))\|_1,
\end{align}
where $\|\cdot\|_1$ denotes the $L^1$ norm\footnote{We remark that other loss functions besides the $L^1$ norm are also possible.}. 
We remark that when $i=j$, the term $\sum_{s\in S} f_{ii}(g_isg_i^{-1})$ in the loss function is not necessarily a constant as in the non-symmetric case, and captures the information contained in self common lines. 

The Fourier coefficients $\widehat{f}_{ij}(k)$ needed in forming the $C^{(k)}$ matrices can be computed by its definition 
\begin{align*}
    \widehat{f}_{ij}(k) = \int_{\SO{3}}  f_{ij}(g)\rho_k(g)^* dg,
\end{align*}
approximated by a quadrature formula on $\SO{3}$ \citep{kostelec2008ffts}. 
Precisely, let each $g\in \SO{3}$ be represented in terms of the ZYZ-Euler angles $(\alpha,\beta,\gamma)\in [0,2\pi]\times [0,\pi]\times[0,2\pi]$, discretized by the nodes 
\begin{align*}
    \alpha_{p} = \frac{2\pi p}{2T},\quad \beta_{\ell} =\frac{\pi(2\ell+1)}{4T},\quad \gamma_{q} = \frac{2\pi q}{2T},\quad 0\leq p,q,\ell\leq 2T-1
\end{align*}
for some integer $T$. 
Then $\widehat{f}_{ij}(k)$ can be approximated by 
\begin{align}\label{eq:quandrature}
    \widehat{f}_{ij}(k) \approx \frac{1}{(2T)^2} \sum_{p=0}^{2T-1}\sum_{q=0}^{2T-1}\sum_{\ell=0}^{2T-1} b_T(\ell)f_{ij}(\alpha_p,\beta_\ell,\gamma_q) \rho_k^*(\alpha_p,\beta_\ell,\gamma_q),
\end{align}
where 
\begin{align*}
    b_T(\ell) = \frac{2}{T}\sin \left[\frac{\pi(2\ell+1)}{4T}\right]\sum_{m=0}^{T-1}\frac{1}{2m+1}\sin\left[\frac{\pi(2\ell+1)(2m+1)}{4T}\right].
\end{align*}
It can be further shown that $f_{ij}$ as in \eqref{eq:fij} depends only on $\alpha$ and $\gamma$ \citep{bandeira2020non}.  
Using the fact $[\rho_k^*(\alpha,\beta,\gamma)]_{m',m} = e^{-\i m' \alpha} d_{m',m}^{(k)}(\beta)e^{-\i m\gamma}$, with $d_{m',m}^{(k)}$ being the Wigner-d matrix, we can simplify \eqref{eq:quandrature} as 
\begin{align*}
    [\widehat{f}_{ij}(k)]_{m',m} &\approx \frac{1}{(2T)^2} \sum_{p=0}^{2T-1}\sum_{q=0}^{2T-1}\sum_{\ell=0}^{2T-1} b_T(\ell)f_{ij}(\alpha_p,\gamma_q) e^{\i m' \alpha_p} d_{m',m}^{(k)}(\beta_\ell)e^{\i m\gamma_q}\\
    & = \frac{1}{(2T)^2} \left[\sum_{\ell=0}^{2T-1} b_T(\ell)d_{m',m}^{(k)}(\beta_\ell)\right]\left[\sum_{p=0}^{2T-1}\sum_{q=0}^{2T-1} e^{\i m' \alpha_p}f_{ij}(\alpha_p,\gamma_q) e^{\i m\gamma_q}\right],
\end{align*}
which can be computed in $O(T^3)$ steps.

\subsection{Truncation of Objective and Inequality Constraints via Fej\'er Kernel}\label{sec:truncation}
We notice that both the SDP objective in \eqref{eq:general sdp framework} and the inequality constraints \eqref{eq:inequality constraints} involve an infinite series that needs to be truncated for implementation. 
We shall simply truncate the objective function by $\sum_{k=1}^K \Tr[C^{(k)}X^{(k)}]$ for a suitable chosen $K$. 

The situation for the inequality constraint, however, requires more careful treatment since the truncated series does not necessarily need to be nonnegative, leaving the validity of such empirical constraint questionable. 
To resolve this issue, we follow \cite{bandeira2020non} by considering instead a Fej\'er kernel approximation to the Dirac delta function in \eqref{eq:delta function constraint} that shows up in the original form of the constraint.  
Precisely, the Fej\'er kernel over $\SO{3}$ is defined for a truncation level $K$ as 
\begin{align}\label{eq:fejer kernel}
    F_K (g) = F_K(\alpha,\beta,\gamma)& := \frac12 \sum_{k=0}^K(K-k+2)(K-k+1)(k+\frac12) \frac{\sin\left[(2k+1)\theta_g/2\right]}{\sin(\theta_g/2)},\quad g\in \SO{3},
\end{align}
where $\theta_g=2\arccos \big[\cos(\frac{\beta}{2})\cos(\frac{\alpha+\gamma}{2})\big]$. 
It arises as the second-order Ces\`aro sum of the Dirichlet kernel $D_k(\theta_g) = \sum_{m=-k}^k e^{\i m \theta_g}$ and is guaranteed to be nonnegative. 
The connection of \eqref{eq:fejer kernel} with the original expansion \eqref{eq:general framework Fourier expansion for delta} of the Dirac delta function is the following chain of identities 
\begin{align*}
    \Tr[\rho_k(g)]= D_k(\theta_g) = \frac{\sin\left[(2k+1)\theta_g/2\right]}{\sin(\theta_g/2)}
\end{align*}
so that we have 
\begin{align*}
    0\leq \frac12 \sum_{k=0}^K(K-k+2)(K-k+1)(k+\frac12) \Tr[\rho_k(g)] ,\quad \forall g\in \SO{3}. 
\end{align*}
Now replacing $g$ by $g^{-1}(g_isg_j^{-1})$ and averaging over $S$, we arrive at the constraint
\begin{align}
    0&\leq \frac{1}{|S|}\sum_{s\in S} F_K(g^{-1}(g_isg_j^{-1}))\nonumber\\
    &=\frac12 \sum_{k=0}^K(K-k+2)(K-k+1)(k+\frac12) \Tr[\rho_k(g)^* X_{ij}^{(k)}]\,\quad \forall g\in \SO{3}. \label{eq:fejer constraint 1}
\end{align}
The final step is then relaxing \eqref{eq:fejer constraint 1} to be satisfied only over a discrete approximation $\G_K$ of $\SO{3}$ based on the \emph{Hopf fibration} \citep{yershova2010generating}.

\subsection{ADMM}
For the actual computation of the SDP \eqref{eq:general sdp framework}, we shall adopt the alternating direction method of multipliers (ADMM) algorithm \citep{wen2010alternating}. 
In this subsection, we shall first review its general structure and then apply it to our specific setting. 
For a general SDP problem
\begin{align*}
    \underset{X}{\operatorname{minimize}}\quad  &\langle C,X\rangle\\
    \text{subject to }\,\,\,  &X\succeq 0 \\
      &\mathcal{A}_E(X) = b_E\\
     &\mathcal{A}_I(X) \geq b_I,
\end{align*}
where $\mathcal{A}_E(X)$ and $\mathcal{A}_I(X)$ involve inner products between certain symmetric matrices and $X$. 
By introducing several Lagrangian multipliers associated with constraints,
\begin{align*}
    -X \preceq 0  \quad   & \longleftrightarrow \quad  S\succeq 0 \\
    b_E-\AE(X) = 0 \quad   & \longleftrightarrow \quad y_E\\
    b_I-\AI(X) \leq 0 \quad   & \longleftrightarrow \quad y_I \geq 0,
\end{align*}
the Lagrangian for this problem has the form 
\begin{align*}
    \L(X;S,y_E,y_I)& = \langle C,X\rangle + \langle S,-X\rangle + \langle y_E, b_E-\AE(X) \rangle + \langle y_I,b_I-\AI(X)\rangle \\
    & = \langle y_E,b_E \rangle + \langle y_I,b_I\rangle +\langle X, C-S-\AE^*(y_E)-\AI^*(y_I)\rangle.  
\end{align*}
Moreover, the dual function takes the form 
\begin{align*}
    g(S,y_E,y_I)=\underset{X}{\operatorname{inf}} \,\, \L(X;S,y_E,y_I)
    = \begin{cases}
        \langle y_E,b_E \rangle + \langle y_I,b_I\rangle & \text{ if } C-S-\AE^*(y_E)-\AI^*(y_I)=0 \\
        -\infty & \text{ otherwise }
    \end{cases},
\end{align*}
leading to the dual problem  
\begin{align*}
    &\underset{S\succeq 0,y_E,y_I\geq 0}{\operatorname{maximize}}\,\, g(S,y_E,y_I)\\
    \Leftrightarrow \quad &\underset{S\succeq 0,y_E,y_I\geq 0}{\operatorname{maximize}}\,\,\langle y_E,b_E \rangle +\langle y_I,b_I\rangle &\\
    & \text{ s.t. }\quad S+\AE^*(y_E)+\AI^*(y_I)=C.
\end{align*}
The augmented Lagrangian takes the form
\begin{align*}
    \mathcal{L}_\rho(S,y_E,y_I;X) = - \langle y_E, b_E\rangle - \langle y_I, b_I\rangle  +\langle  X,S+\AE^*(y_E)+\AI^*(y_I)-C\rangle   \\+\frac{\rho}{2}\|S+\AE^*(y_E)+\AI^*(y_I)-C\|_F^2 
\end{align*}
and we shall minimize it (over $S\succeq 0,y_E,y_I\geq 0$ sequentially). 
The ADMM scheme then takes the following steps:
\begin{itemize}
    \item Update $S$. We perform a projection onto the PSD cone, that is
    \begin{align*}
        S^{(k+1)} &= \left[C-\AE^*(y_E^{(k)})-\AI^*(y_I^{(k)})-{X^{(k)}}/{\rho} \right]_{\succeq0}.
    \end{align*}

    \item Update $y_E$. We set the gradient $\nabla_{y_E} \L_{\rho}$ to be $0$ and solve for $y_E^{(k+1)}$, which gives
    \begin{align*}
        \nabla_{y_E} \L_{\rho} &=\AE(X)-b_E +\rho \Big[\AE\AE^*(y_E)+\AE\Big(S-C+\AI^*(y_I)\Big)\Big] = 0\\
        \Rightarrow \quad y_E^{(k+1)}&=(\AE\AE^*)^{-1}\left[\frac{b_E-\AE(X^{(k)})}{\rho} +\AE \Big(C-S^{(k+1)}-\AI^*(y_I^{(k)})\Big)\right]\\
        & =(\AE\AE^*)^{-1}\left[\frac{b_E}{\rho} +\AE \left(-\frac{X^{(k)}}{\rho}+C-S^{(k+1)}-\AI^*(y_I^{(k)})\right)\right].
    \end{align*}
    In the non-symmetric NUG case, $\AE\AE^*$ is simply the identity. 

    \item Update $y_I$. If we directly solve $0=\nabla_{y_I}\L_\rho$, then we would need to compute $(\AI\AI^*)^{-1}$, which is computationally quite expensive. 
    Instead, we consider $\L_\rho + \frac{\rho}{2}\|y_I-y_I^{(k)}\|^2_{\lambda \I-\AI\AI^*}$, where $\lambda$ is the (approximately) largest eigenvalue of $\AI\AI^*$.
    \begin{align*}
        &\quad \L_\rho + \frac{\rho}{2}\|y_I-y_I^{(k)}\|^2_{\lambda \I-\AI\AI^*}\\
        &=-\langle b_I,y_I \rangle +
        \langle \AI(X),y_I\rangle+
        \rho \left\langle y_I, \AI\Big(S-C+\AE^*(y_E)+\AI^*(y_I^{(k)})\Big) \right\rangle \\
        &+ \frac{\lambda\rho}{2}\langle y_I,y_I \rangle -\lambda\rho \langle y_I,y_I^{(k)}\rangle  +\text{ a term independent of } y_I \\
        & \quad\nabla_{y_I}\left[\L_\rho +\frac{\rho}{2}\|y_I-y_I^{(k)}\|^2_{\lambda \I-\AI\AI^*}\right]\\
        &=\lambda \rho y_I -\lambda \rho y_I^{(k)}-b_I+\AI(X) + \rho \left[ \AI\Big(S-C+\AE^*(y_E)+\AI^*(y_I^{(k)})\Big) \right]\\
        y_I^{(k+1)}&=\left[\frac{b_I-\AI(X^{(k)})}{\lambda\rho} +y_I^{(k)} +\frac{1}{\lambda}\AI\Big(C-S^{(k+1)}-\AE^*(y_E^{(k+1)})-\AI^*(y_I^{(k)})\Big)\right]_{\geq 0}.
    \end{align*}
    We can perform multiple updates for $y_I$ to improve the performance.

    \item Update the multiplier $X$. 
    \begin{align*}
        X^{(k+1)}=X^{(k)} +\rho\Big[S^{(k+1)}+\AE^*(y_E^{(k+1)})+\AI^*(y_I^{(k+1)})-C\Big]. 
    \end{align*}
\end{itemize}
We use the randomly permutated order to update the primal variables to ensure convergence in practice; see \cite{sun2020efficiency}. 

\paragraph{Application to our setting:}

Consider 
\begin{align*}
    \mathbb{X}=\begin{bmatrix}
    X & & \\
     & X_d &\\
     & & X_q
    \end{bmatrix},
\end{align*}
where $X$ is the block diagonal matrix containing all $X^{(k)}$, $X_d$ lists all diagonal blocks of $I_{d_k}-\XX{i}{i}{k}$, and $X_q$ contains all transformations $Q(\XX{i}{j}{1})$ (cf. Section \ref{subsec: so3convexhull}). 
This then leads to the following sets of more explicit constraints.  
\begin{equation}\tag{Equality constraints}
\begin{aligned}
    \Tr[\XX{i}{i}{k}]&=\text{rank}(A_k)\qquad \forall k,i,\\
    \XX{i}{i}{k}+ [X_d]_{i}^{(k)} &= I_{d_k} \qquad \forall k,i,\\
    Q(\XX{i}{j}{1})-[X_q]_{ij}&=0 \qquad \forall i,j.
\end{aligned}
\end{equation}
\begin{equation}\tag{PSD constraints}
\begin{aligned}
    X^{(k)} \succeq 0,\quad [X_d]_i^{(k)} \succeq 0,\quad [X_q]_{ij} \succeq 0 \qquad \forall k,i,j.
\end{aligned}
\end{equation}
\begin{equation}\tag{Inequality constraints}
\begin{aligned}
    \frac12(K-2)(K-1)+\sum_{k=1}^K(K-k+2)(K-k+1)(k+\frac12) \Tr[W^{(k)}(\alpha,\beta,\gamma)^*X_{ij}^{(k)}] \geq 0, \quad \forall (\alpha,\beta,\gamma)\in\G_K\,\quad \forall i,j.
\end{aligned}
\end{equation}
Collecting these constraints altogether leads to the actual SDP that we shall solve for computation. 

\section{Numerical Experiments}\label{sec:numerics}
In this section, we shall demonstrate the effectiveness of our symmetry NUG framework to the cryo-EM problem on both simulated and real datasets. 
The code and data will be made publicly available upon publication through the ASPIRE open-source package \citep{garrett_wright_2023_8321443}. 

\subsection{Simulation Studies}
To start with, we consider simulated datasets generated from the base molecule \href{https://www.ebi.ac.uk/emdb/EMD-2660}{EMD-2660} taken from the EMDB database \citep{lawson2016emdatabank}.
The original EMD-2660 molecule $\phi$ does not have any symmetry, so we shall generate its symmetric versions by averaging itself with its symmetric transformations. 
Precisely, for a symmetry group $S$, generate
\begin{align*}
    \phi_S = \frac{1}{|S|}\sum_{s\in S} s\circ \phi,
\end{align*}
which then has $S$ as its symmetry group. 

Now given a molecule $\phi_S: \R^3\rightarrow \R$, we shall generate projection images by first sampling random rotation matrices $\{R_i\}_{i=1}^N\subset \SO{3}$, and then define
\begin{align*}
    I_i = P(R_i\circ\phi_S) + \epsilon_i,
\end{align*}
where we recall that $P$ is the tomographic projection operator and  $\epsilon_i$ is entry-wise Gaussian noise with variance $\sigma^2$. 
In particular, we are assuming perfectly centered images and ignoring the CTF effects here.  
In this section, we shall use the ASPIRE-Python package \citep{garrett_wright_2023_8321443} for generating the projection images $I_i\in \R^{L\times L}$, whose size is fixed to be $L=128$. 

We shall test the performance of the NUG algorithm on recovering $\{R_i\}_{i=1}^N$ for different symmetry groups and signal-to-noise ratios (SNR), defined as 
\begin{align*}
    \text{SNR } = \frac{\text{Var(Signal)}}{\text{Var(Noise)}} .
\end{align*}
Figure \ref{fig:image SNR} shows visualization of images with different SNRs generated by the original non-symmetric molecule EMD-2660. 
The truncation level as discussed in Section \ref{sec:truncation} is set to be $K=12$ throughout, with $\G_K$ containing 16317 elements that approximately form a uniform grid over $\SO{3}$.
We remark that a larger truncation level leads to better approximation of the objective function, but at the cost of a larger SDP that is more costly and prone to numerical error. 
The choice $K=12$ is tuned empirically to achieve such a balance, beyond which no improvement is attained, and is observed to work well for all the considered cyclic and dihedral examples. 

We perform proximal refinement as in Section \ref{sec:proximal} with 4 iterations, by projecting the solutions to be of rank $2\cdot\operatorname{rank}(A_k)+3$, $2\cdot\operatorname{rank}(A_k)+2$, $2\cdot\operatorname{rank}(A_k)+1$, $2\cdot\operatorname{rank}(A_k)$ respectively. The penalty $\eta_k$ is set to be $\frac{1}{k}$. 
\begin{figure}[!htb]
    \centering
    \minipage{0.2\textwidth}
    \includegraphics[width=\linewidth]{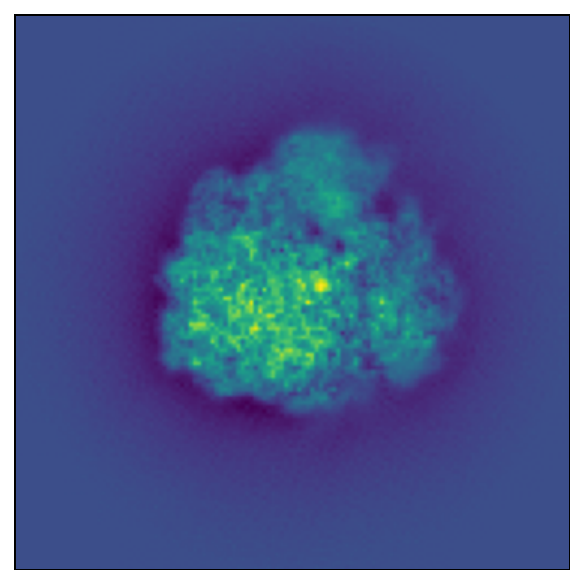}
    \subcaption*{Clean}
    \endminipage\hfill
    \minipage{0.2\textwidth}
    \includegraphics[width=\linewidth]{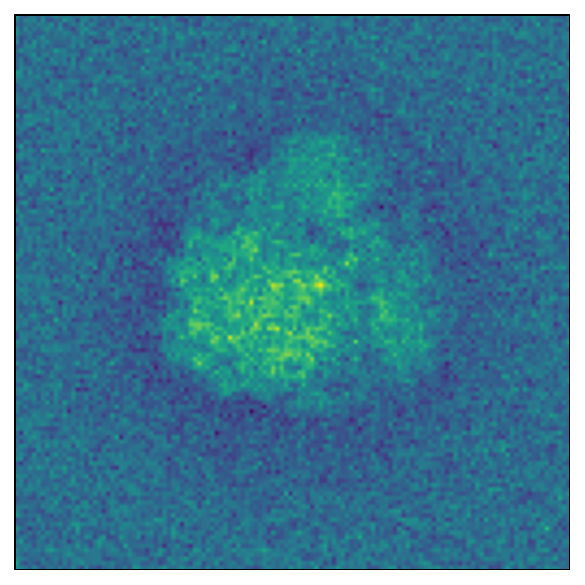}
    \subcaption*{SNR=1}
    \endminipage\hfill
    \minipage{0.2\textwidth}
    \includegraphics[width=\linewidth]{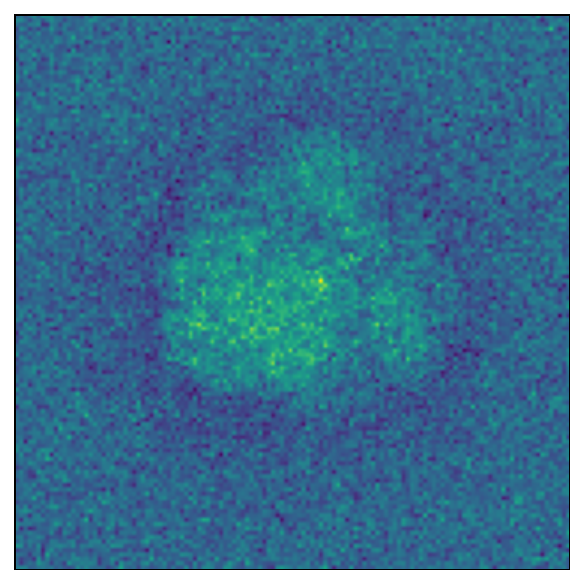}
    \subcaption*{SNR=1/2}
    \endminipage\hfill
    \minipage{0.2\textwidth}
    \includegraphics[width=\linewidth]{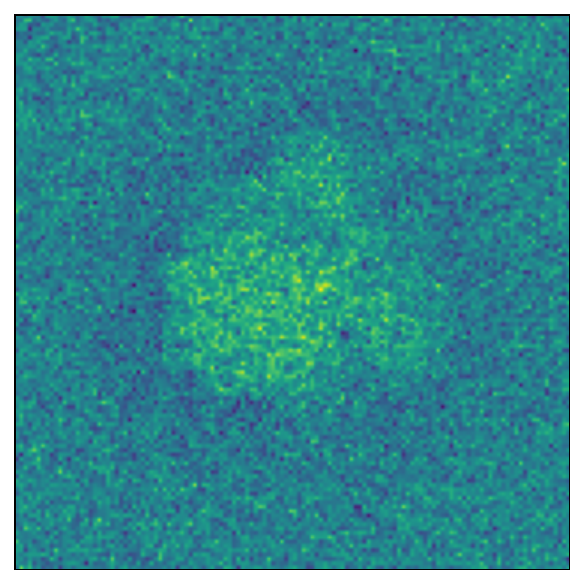}
    \subcaption*{SNR=1/4}
    \endminipage\hfill
    \minipage{0.2\textwidth}
    \includegraphics[width=\linewidth]{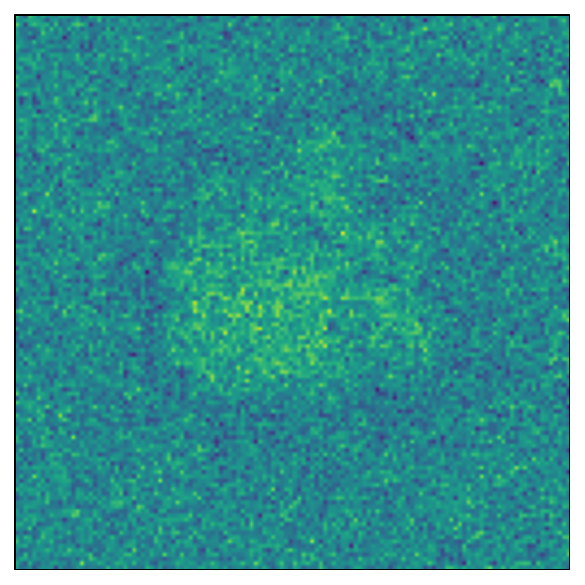}
    \subcaption*{SNR=1/8}
    \endminipage\hfill
    \caption{Simulated projection images of EMD-2660 at different SNR values.}\label{fig:image SNR}
\end{figure}
Denoting the estimated rotations as $\{\widehat{R}_i\}_{i=1}^N$, we shall quantify the performance via the mean squared error (MSE) of estimating all pairwise rotations
\begin{equation}\label{eq:MSE cryoem}
\begin{aligned}
    \text{MSE } &= \operatorname{min} \left\{\frac{1}{N^2}  \sum_{i,j=1}^N \underset{s\in S}{\operatorname{min}}\,\, \|R_i^{-1}R_j-\widehat{R}_i^{-1}s \widehat{R}_j\|_F^2 ,\frac{1}{N^2}  \sum_{i,j=1}^N \underset{s\in JSJ^*}{\operatorname{min}}\,\, \|R_i^{-1}R_j-(J\widehat{R}_iJ^*)^{-1}s (J\widehat{R}_jJ^*)\|_F^2 \right\}\\
    &=\operatorname{min} \left\{\frac{1}{N^2}  \sum_{i,j=1}^N \underset{s\in S}{\operatorname{min}}\,\, \|R_i^{-1}R_j-\widehat{R}_i^{-1}s \widehat{R}_j\|_F^2 ,\frac{1}{N^2}  \sum_{i,j=1}^N \underset{s\in S}{\operatorname{min}}\,\, \|R_i^{-1}R_j-J\widehat{R}_i^{-1}s \widehat{R}_jJ^*\|_F^2 \right\}
\end{aligned}
\end{equation}
where we recall that $J$ is the reflection matrix \eqref{eq:reflection matrix}. 

\begin{remark}
Since we can only recover the rotations up to symmetry and a global rotation, $\widehat{R}_i$ is only an estimate of $RJ^{\delta}s_iR_iJ^{\delta}$, where $s_i\in S$ can be different for each $i$, $R\in \SO{3}$ is the global rotation, and $\delta\in \{0,1\}$ represents the $J$-ambiguity.
Ideally, we should quantify the error as 
\begin{align*}
    \operatorname{min}\left\{\underset{R \in \SO{3}}{\operatorname{min}}\frac{1}{N} \sum_{i=1}^N\underset{s\in S}{\operatorname{min}} \|R_i-sR\widehat{R}_i\|_F^2, \underset{R \in \SO{3}}{\operatorname{min}}\frac{1}{N} \sum_{i=1}^N\underset{s\in S}{\operatorname{min}} \|R_i-sJR\widehat{R}_iJ\|_F^2\right\} 
\end{align*}
However, it is not obvious that there is an efficient way of calculating such error, so we resort to the MSE of estimating relative rotations defined in \eqref{eq:MSE cryoem} that sidesteps the determination of the global rotation $R$. \qed 
\end{remark}

\begin{table}[!htb]
    \centering
    \begin{tabular}{|c|c|c|c|c|c|}\hline
         SNR&  Clean&  1&  1/2&  1/4& 1/8\\\hline\hline 
 $C_1$ (without PR)& 0.030 (18.10)& 0.041 (30.53)& \textbf{0.034} (33.50)& \textbf{0.088} (44.10)&\textbf{0.570} (62.08)\\\hline
 Synchronization & \textbf{0.000} (7.58)& \textbf{0.005} (23.25)& 0.046 (33.27)& $>1$&$>1$\\\hline \hline
         $C_2$ (without PR)&  0.032 (16.60)&  0.026 (25.61)&  0.047 (32.12)&  0.195 (49.83)&  $>1$\\\hline
         $C_2$ (with PR)&  \textbf{0.017} (12.73)&  \textbf{0.017} (23.67)&  \textbf{0.027} (29.58)&  \textbf{0.043} (37.89)& $>1$\\\hline 
         Synchronization & 0.521 (8.29)& 0.684 (40.00)& $>1$& $>1$& $>1$\\\hline\hline
         $C_3$ (without PR)&   0.026 (14.53)&  0.033 (24.38)&  0.096 (38.16)&  0.852 (60.91)& $>1$\\\hline
         $C_3$ (with PR)&  0.017 (11.58)&  \textbf{0.018} (22.17)&  \textbf{0.037} (30.71)&  \textbf{0.054} (38.56)&  $>1$\\\hline 
         Synchronization & \textbf{0.004} (7.54)& $>1$& $>1$& $>1$&  $>1$\\\hline\hline
         $C_4$ (without PR)&  0.153 (35.84)&  0.370 (60.91)&  0.804 (64.75)&  $>1$& $>1$\\\hline
         $C_4$ (with PR)&  0.050 (20.47)&  \textbf{0.046} (27.19)&  \textbf{0.457} (35.47)&  $>1$& $>1$\\\hline 
         Synchronization & \textbf{0.005} (7.54)& $>1$& $>1$& $>1$& $>1$\\\hline\hline
         $C_5$ (without PR)&  0.190 (39.41)&   0.189 (41.66)&  0.724 (65.54)&  $>1$& $>1$\\\hline
         $C_5$ (with PR)&  0.060 (20.20)&  0.065 (27.88)&  \textbf{0.338} (48.10)&  $>1$& $>1$\\\hline 
         Synchronization & \textbf{0.006} (7.54)& \textbf{0.063} (27.41)& $>1$& $>1$& $>1$\\\hline\hline
         $C_6$ (without PR)& 0.442 (50.99)& 0.908 (65.63)& $>1$& $>1$&$>1$\\\hline
         $C_6$ (with PR)& 0.105 (29.47)& \textbf{0.100} (65.19)& \textbf{0.622} (66.08)& $>1$&$>1$\\\hline 
         Synchronization & \textbf{0.006} (7.54)& $>1$& $>1$& $>1$& $>1$\\ \hline
    \end{tabular}
    \caption{MSE \eqref{eq:MSE cryoem} with $N=40$ images of size $L=128$ for different symmetry groups and SNRs. FSCs of the reconstructed volumes are reported in the parenthesis in angstrom. 
    Here PR refers to the proximal refinement step as in Section \ref{sec:proximal}.}
    \label{table:mse cryoem n=40}
\end{table}

\begin{table}[!htb]
    \centering
    \begin{tabular}{|c|c|c|c|c|c|}\hline
         SNR&  Clean&  1&  1/2&  1/4& 1/8\\\hline \hline 
 $C_1$ (without PR)& 0.022 (14.28)& 0.028 (21.76)& 0.039 (26.59)& \textbf{0.076} (34.91)&\textbf{0.398} (46.79)\\\hline
 Synchronization & \textbf{0.000} (7.54)& \textbf{0.003} (18.53)& \textbf{0.021} (24.55)& 0.150 (39.25)&$>1$\\\hline\hline 
         $C_2$ (without PR)&  0.019 (11.06)&  0.021 (20.58)&  0.043 (25.22)&  0.102 (35.73)&  $>1$\\\hline
         $C_2$ (with PR)&  0.013 (8.92)&  \textbf{0.017} (20.25)&  \textbf{0.027} (22.75)&  \textbf{0.035} (27.04)& $>1$\\\hline 
         Synchronization & \textbf{0.003} (7.54)& 0.248 (24.35)& 0.661 (42.46)& $>1$& $>1$\\\hline\hline 
         $C_3$ (without PR)&   0.036 (14.64)&  0.032 (20.82)&  0.138 (33.92)&  0.568 (50.78)& $>1$\\\hline
         $C_3$ (with PR)&  0.025 (10.13)&   \textbf{0.019} (19.40)&  \textbf{0.023} (22.19)&  \textbf{0.039} (27.68)& $>1$\\\hline 
         Synchronization & \textbf{0.004} (7.54)& $>1$& $>1$& $>1$& $>1$\\\hline\hline 
         $C_4$ (without PR)&   0.082 (16.27)&  0.242 (38.26)&  0.598 (60.07)&  $>1$& $>1$\\\hline
         $C_4$ (with PR)&  0.037 (11.92)&   \textbf{0.051} (21.44)&  \textbf{0.045} (24.14)&  $>1$& $>1$\\\hline 
         Synchronization & \textbf{0.001} (7.54)& $>1$& $>1$& $>1$& $>1$\\\hline\hline 
         $C_5$ (without PR)&  0.094 (17.85)&  0.273 (41.37)&  $>1$&  $>1$& $>1$\\\hline
         $C_5$ (with PR)&  0.050 (10.46)&  0.069 (21.50)&  $>1$&  $>1$& $>1$\\\hline 
         Synchronization & \textbf{0.007} (7.54)& \textbf{0.049} (20.28)& $>1$& $>1$& $>1$\\\hline\hline 
         $C_6$ (without PR)& 0.238 (29.61)& 0.737 (61.37)& $>1$& $>1$&$>1$\\\hline
         $C_6$ (with PR)& 0.093 (14.82)& \textbf{0.174} (47.39)& \textbf{0.487} (66.63)& $>1$&$>1$\\\hline 
         Synchronization & \textbf{0.010} (7.54)& $>1$& $>1$& $>1$& $>1$\\ \hline
    \end{tabular}
    \caption{MSE \eqref{eq:MSE cryoem} with $N=100$ images of size $L=128$ for different symmetry groups and SNRs. FSCs of the reconstructed volumes are reported in the parenthesis in angstrom. 
    Here PR refers to the proximal refinement step as in Section \ref{sec:proximal}.}
    \label{table:mse cryoem n=100}
\end{table}

Tables \ref{table:mse cryoem n=40} and \ref{table:mse cryoem n=100} compare the performance of our proposed approach with the synchronization based approach proposed in \cite{pragier2019common}.
The comparisons run across cyclic symmetries from $C_1$ up to $C_6$ with $N=40$ and $N=100$, in terms of both the MSE \eqref{eq:MSE cryoem} and the resolution as determined by the Fourier Shell Correlation (FSC) of the resulting reconstructed volumes. 
We see that our algorithm returns fairly accurate viewing direction estimations for moderate noise levels and symmetry orders, and is more robust to noise.
The synchronization method outperforms ours in the clean case as expected in that synchronization exploits exact common lines geometry that becomes more accurate in the noiseless setting than the truncation-relaxation-rounding approximation in SDP approaches. 
If instead the original NUG algorithm \citep{bandeira2020non} is applied to these symmetric molecules, we have observed that large MSEs are incurred even for clean images and the reconstructed molecule look visually very different from the ground truth.

We also note that the proximal refinement step is indeed able to improve the accuracy of the initial estimates, even by a significant amount in certain cases.
The performance starts to degrade as noise level becomes high and symmetry order is large. 
We believe this is due to the fact that our SDP relaxation becomes less tight in such cases since we are dropping many low rank constraints that should be imposed on the diagonal blocks, which is exacerbated by high noise levels. 
Nevertheless, we shall demonstrate on real datasets in the next subsection that when working with class averages which have much smaller noise level, our NUG framework is able to produce good ab initio models for symmetric molecules.  
Tables \ref{table:mse cryoem D n=40} and \ref{table:mse cryoem D n=100} record the performance of our NUG approach for dihedral symmetries, where similar observations are made.  
We note that \cite{rosen2020common} considers orientation estimation for molecules with $D_2$ symmetry and, to the best of our knowledge, is the only prior work addressing dihedral symmetry in this context. Our algorithm therefore provides a useful complementary approach.

\begin{table}[!htb]
    \centering
    \begin{tabular}{|c|c|c|c|c|c|}\hline
         SNR&  Clean&  1&  1/2&  1/4& 1/8\\\hline\hline
         $D_3$ (without PR)& 0.049 (35.52)&  0.035 (52.32)& 0.204 (61.37)& 0.684 (64.93)&$>1$\\\hline
         $D_3$ (with PR)&  0.040 (17.86)& 0.027 (51.54)& 0.123 (60.68)& 0.448 (60.07)&$>1$\\ \hline\hline
         $D_4$ (without PR)& 0.158 (60.60)&  0.220 (61.69)& 0.576 (43.23)& $>1$&$>1$\\\hline
         $D_4$ (with PR)& 0.082 (19.43)& 0.097 (50.67)& 0.511 (38.68)& $>1$&$>1$\\ \hline
         \hline
         $D_5$ (without PR)& 0.101 (12.52)&   0.180 (61.69)& $>1$& $>1$&$>1$\\\hline
         $D_5$ (with PR)& 0.088 (12.15)& 0.167 (49.68)& $>1$&  $>1$&$>1$\\ \hline\hline
         $D_6$ (without PR)& 0.089 (16.57)& $>1$& $>1$& $>1$& $>1$\\\hline
         $D_6$ (with PR)& 0.391 (47.06)& $>1$& $>1$&$>1$& $>1$\\ \hline
\end{tabular}
    \caption{MSE \eqref{eq:MSE cryoem} with $N=40$ images of size $L=128$ for different symmetry groups and SNRs. FSCs of the reconstructed volumes are reported in the parenthesis in angstrom. 
    Here PR refers to the proximal refinement step as in Section \ref{sec:proximal}.}
    \label{table:mse cryoem D n=40}
\end{table}

\begin{table}[!htb]
    \centering
    \begin{tabular}{|c|c|c|c|c|c|}\hline
         SNR&  Clean&  1&  1/2&  1/4& 1/8\\\hline\hline
         $D_3$ (without PR)& 0.085 (12.72)&   0.064 (44.71)& 0.118 (52.61)& 0.903 (50.78)&$>1$\\\hline
         $D_3$ (with PR)& 0.082 (16.51)& 0.072 (28.61)& 0.096 (47.53)& 0.646 (58.90)&$>1$\\ \hline\hline
         $D_4$ (without PR)& 0.029 (14.66)& 0.160 (27.36)& 0.455 (62.33)& $>1$&$>1$\\\hline
         $D_4$ (with PR)& 0.024 (16.71)& 0.184 (26.25)& 0.456 (62.08)&  $>1$&$>1$\\ \hline
         \hline
         $D_5$ (without PR)& 0.110 (12.65)&   0.185 (45.86)& 0.808 (53.78)& $>1$&$>1$\\\hline
         $D_5$ (with PR)& 0.109 (16.95)& 0.125 (44.83)& 0.771 (42.61)&  $>1$&$>1$\\ \hline\hline
         $D_6$ (without PR)& 0.319 (19.80)& $>1$& $>1$& $>1$& $>1$\\\hline
         $D_6$ (with PR)& 0.566 (26.99)& $>1$& $>1$&$>1$&$>1$\\ \hline
\end{tabular}
    \caption{MSE \eqref{eq:MSE cryoem} with $N=100$ images of size $L=128$ for different symmetry groups and SNRs. FSCs of the reconstructed volumes are reported in the parenthesis in angstrom. 
    Here PR refers to the proximal refinement step as in Section \ref{sec:proximal}.}
    \label{table:mse cryoem D n=100}
\end{table}

To close this section, we remark that one could also exclude the effects of self common lines by removing the diagonal losses $f_{ii}$'s (cf. \eqref{eq:fij}) from the objective function. 
Our experiments suggest that whether the self common lines are excluded or not have small effects on the algorithmic performance. 
Lastly, we present in Table \ref{tab:time} the computational time for solving the SDP \eqref{eq:general sdp framework} for different number of images, where the experiments are run on one A100 GPU with 80G memory. 
The reported runtime corresponds to the $C_2$ molecule, with comparable runtimes observed for the other symmetries tested.  
In particular, the reported runtimes exclude proximal refinement. The total runtime should therefore be multiplied by five, since the refinement step requires four additional SDP solves. 
We can see that the computational time scales roughly quadratically with the number of images. 
The running times for the synchronization based algorithm depend on the noise level as well as the symmetry order due to specialized algorithms for $C_2,C_3,C_4$ symmetries. 
On a 2.8 GHz Intel CPU, they can vary between 10$\sim$100 minutes for $N=100$ images and 110$\sim$1100 minutes for $N=300$ images.

\begin{table}[!htb]
    \centering
    \begin{tabular}{|c|c|c|c|c|c|}\hline
         $N$ & 20 & 50&  100&  200& 300\\\hline
         Time (minutes)& 2.0 &  6.8&  21.3&  78.7& 176.2\\ \hline
    \end{tabular}
    \vspace{-5pt}
    \caption{Computation time of solving the SDP \eqref{eq:general sdp framework} for different number of images with truncation level $K=12$. }
    \label{tab:time}
\end{table}

\subsection{Experimental Datasets}

\subsubsection{EMPIAR-10081}
In this section, we shall apply our NUG framework on a real dataset with $C_4$ symmetry taken from the \href{https://www.ebi.ac.uk/empiar/EMPIAR-10081/}{EMPIAR-10081} entry from the EMPIAR image database \citep{iudin2016empiar} that contains 55870 raw images. 
We use RELION \citep{scheres2012relion} for generating 50 class averages from the original dataset. 
Figure \ref{fig:10081 images} shows several raw images from EMPIAR and the computed class averages. 
\begin{figure}[!htb]
    \centering
    \minipage{0.2\textwidth}
    \includegraphics[width=\linewidth]{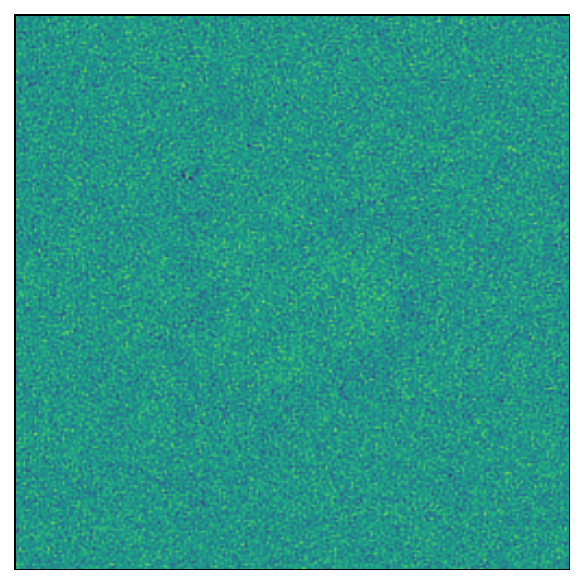}
    \endminipage\hfill
    \minipage{0.2\textwidth}
    \includegraphics[width=\linewidth]{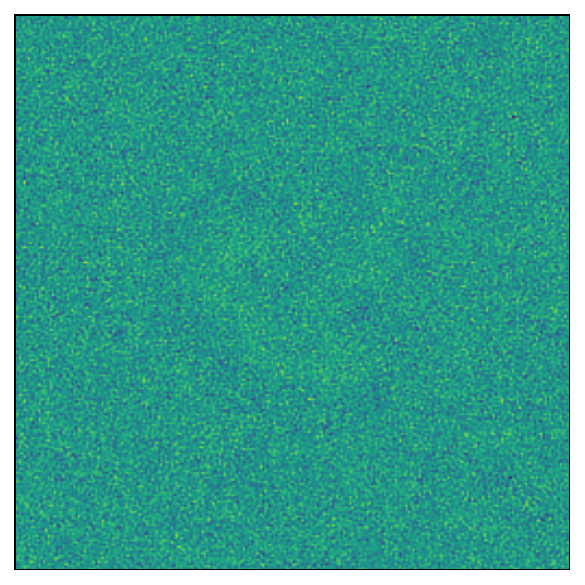}
    \endminipage\hfill
    \minipage{0.2\textwidth}
    \includegraphics[width=\linewidth]{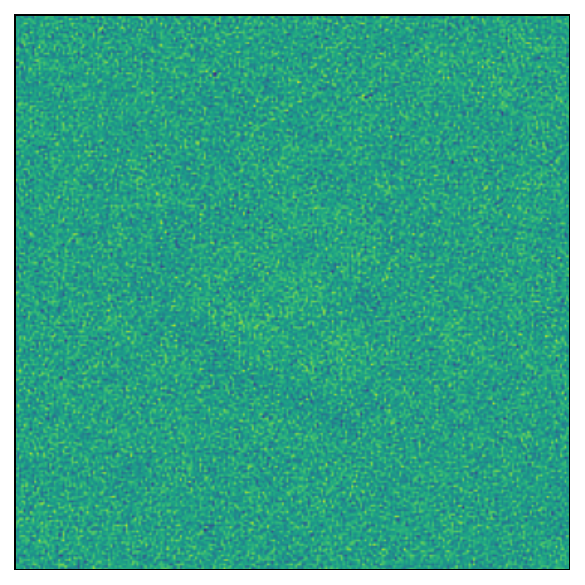}
    \endminipage\hfill
    \minipage{0.2\textwidth}
    \includegraphics[width=\linewidth]{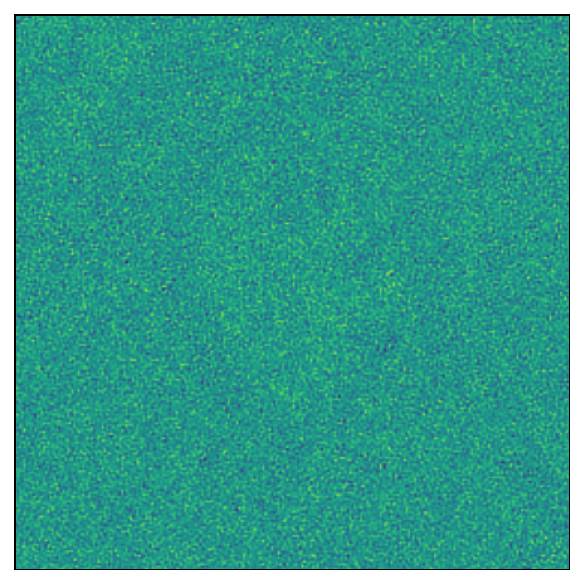}
    \endminipage\hfill
    \minipage{0.2\textwidth}
    \includegraphics[width=\linewidth]{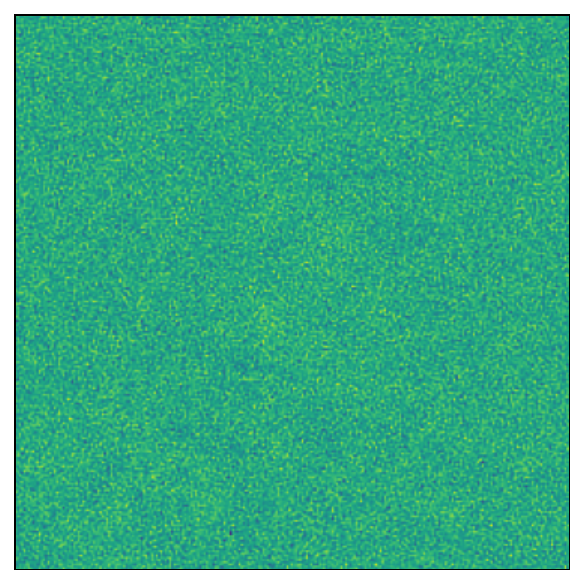}
    \endminipage\hfill

    \minipage{0.2\textwidth}
    \includegraphics[width=\linewidth]{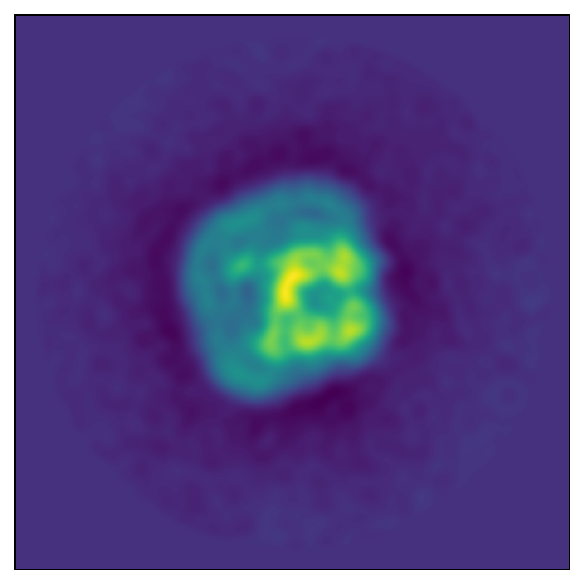}
    \endminipage\hfill
    \minipage{0.2\textwidth}
    \includegraphics[width=\linewidth]{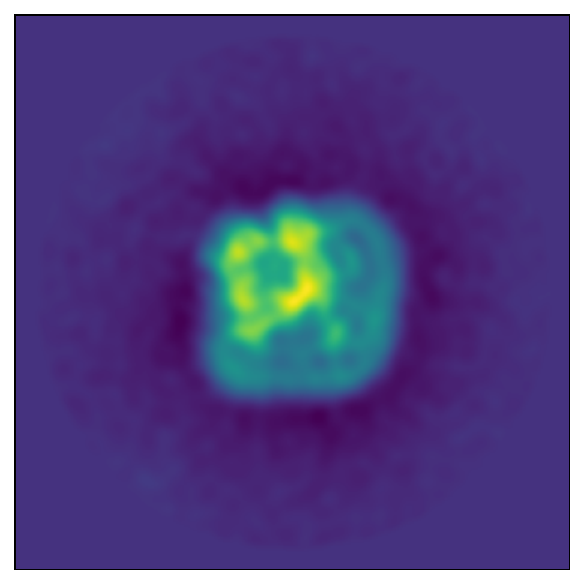}
    \endminipage\hfill
    \minipage{0.2\textwidth}
    \includegraphics[width=\linewidth]{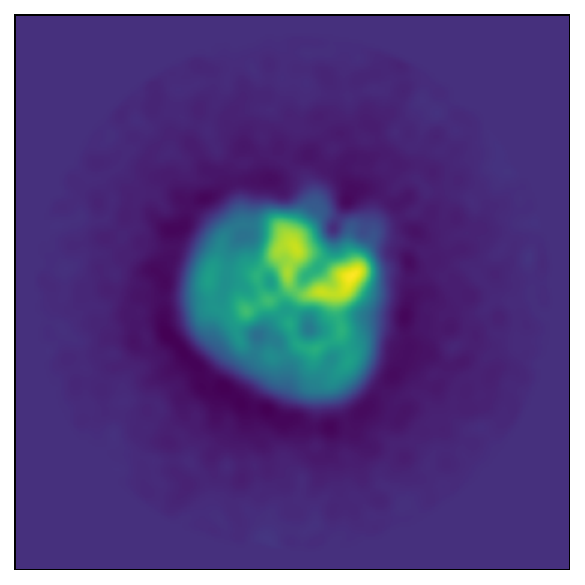}
    \endminipage\hfill
    \minipage{0.2\textwidth}
    \includegraphics[width=\linewidth]{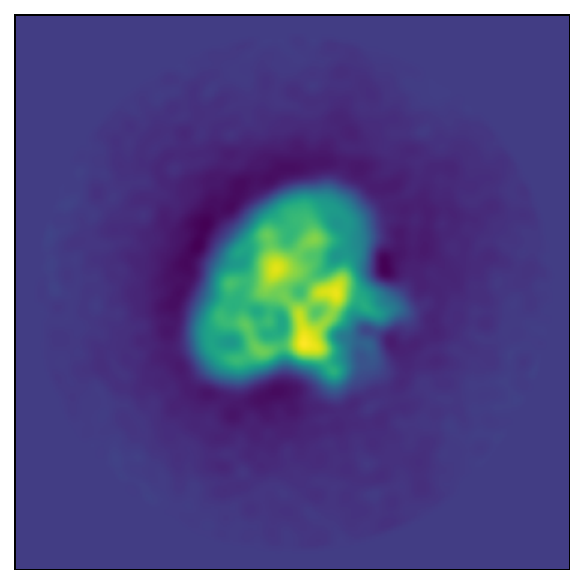}
    \endminipage\hfill
    \minipage{0.2\textwidth}
    \includegraphics[width=\linewidth]{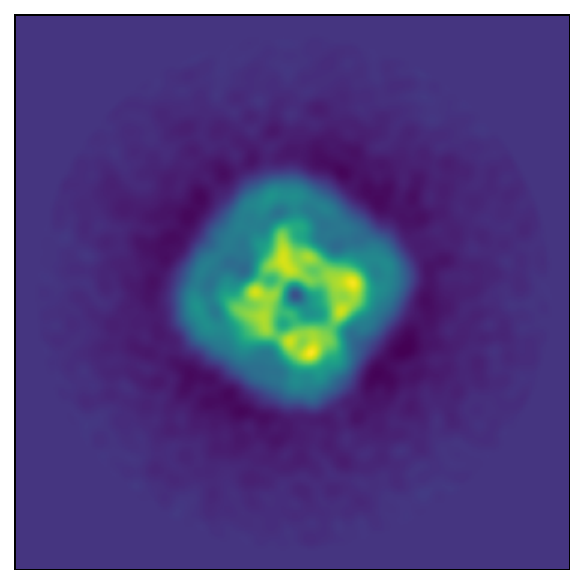}
    \endminipage\hfill
    \caption{Raw images from EMPIAR 10081 (top) and class averages computed (bottom).}\label{fig:10081 images}
\end{figure}
Out of the 50 class averages, we manually pick 17 of them that look visually meaningful, which are of size 256 by 256.

We apply our NUG algorithm to obtain the orientation estimations for these 17 class averages, with truncation level $K=12$ and 10 iterations of proximal refinement where in each iteration the projection rank is set to be $\operatorname{rank}(A_k)$ and the penalty is set to be $\eta_k=\frac{1}{k}$. 
The total computational time is about 20 minutes on an A100 GPU with 80G memory. 
We then use the estimated orientations to build an ab initio model as displayed in Figure \ref{fig:10081-abinitio 3d}.
We can see that even with only 17 images, our algorithm is able to yield a reasonably good reconstruction with a Fourier shell correlation (FSC) 15.45$\mathring{A}$ with respect to the reference model (which we take as the reconstructed volume reported under the EMPIAR-10081 entry) as is shown in Figure \ref{fig:10081-abinitio fsc}.

\begin{figure}[!htb]
    \centering
    \minipage{0.33\textwidth}
    \includegraphics[width=\linewidth]{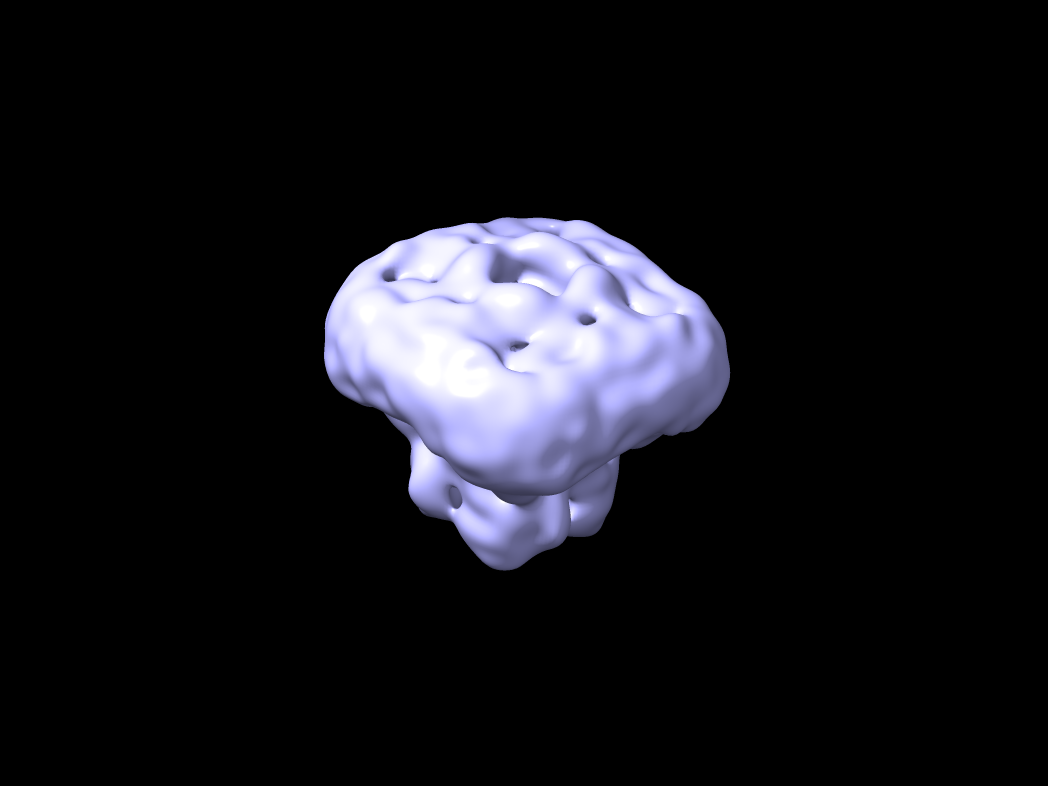}
    \subcaption{Ab initio model}\label{fig:10081-abinitio 3d}
    \endminipage\hfill
    \minipage{0.33\textwidth}
    \includegraphics[width=\linewidth]{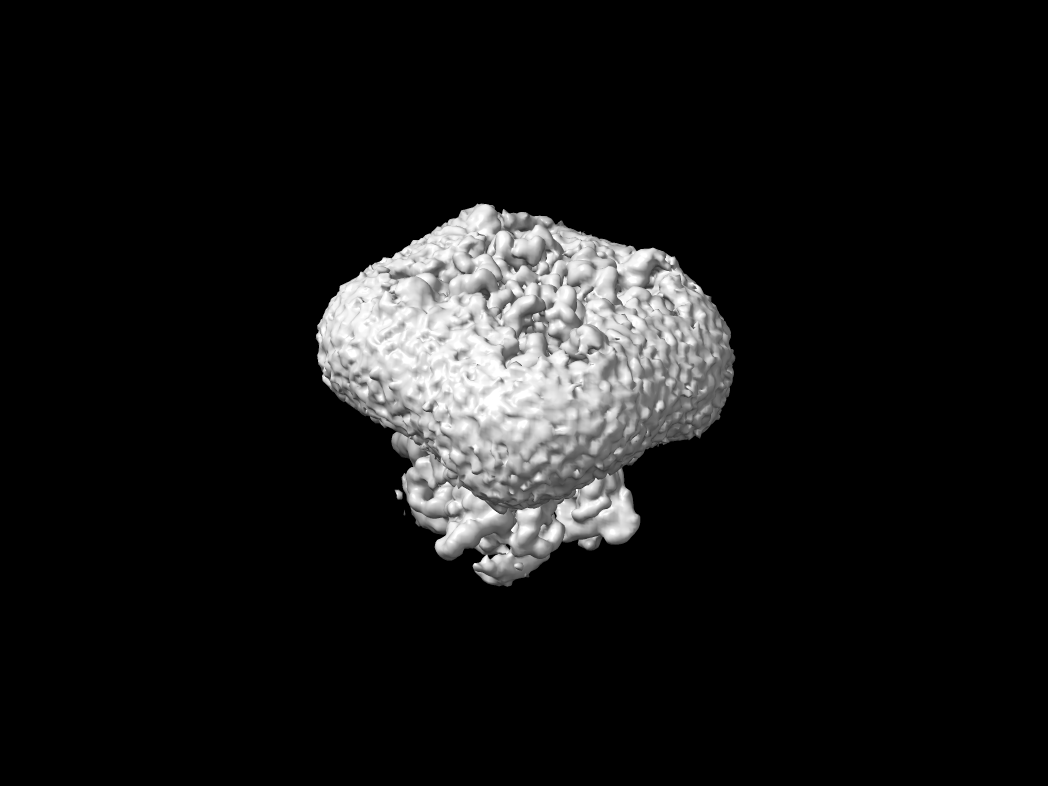}
    \subcaption{Refined model}\label{fig:10081-refine 3d}
    \endminipage\hfill
    \minipage{0.33\textwidth}
    \includegraphics[width=\linewidth]{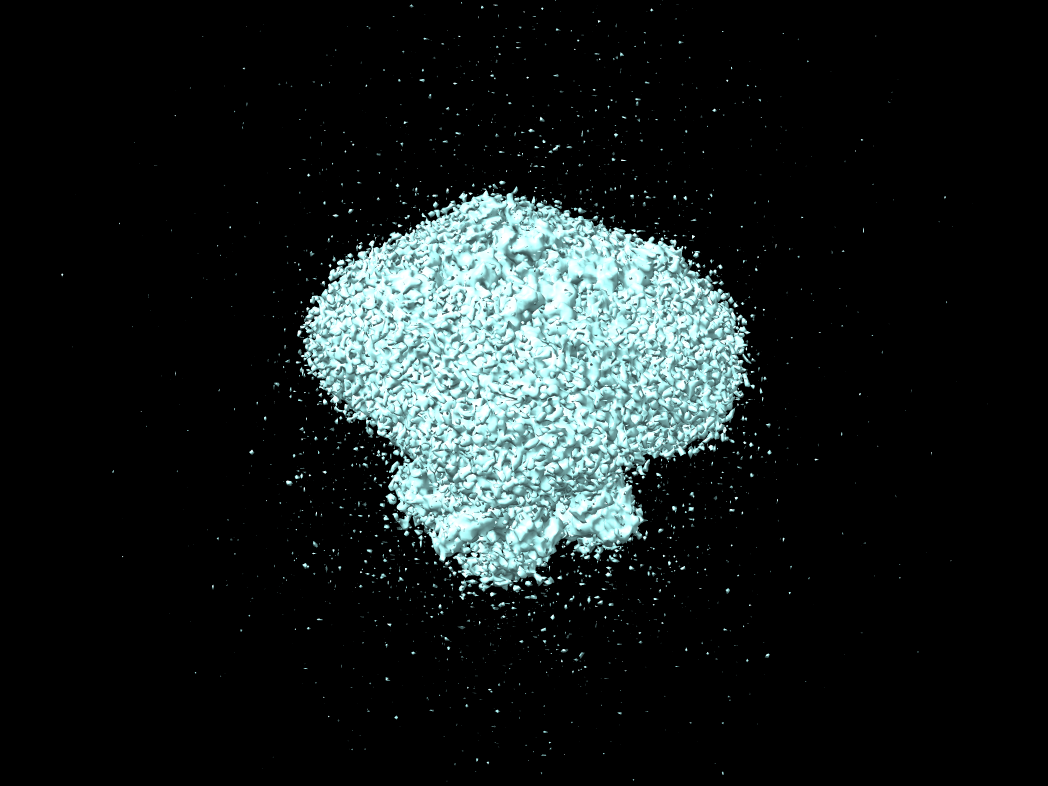}
    \subcaption{Reference model}\label{fig:10081-truth 3d}
    \endminipage\hfill
    \caption{Visualizations of the ab initio model, the refined model, and the reference model. }
\end{figure}

\begin{figure}[!htb]
    \centering
    \minipage{0.5\textwidth}
    \includegraphics[width=\linewidth]{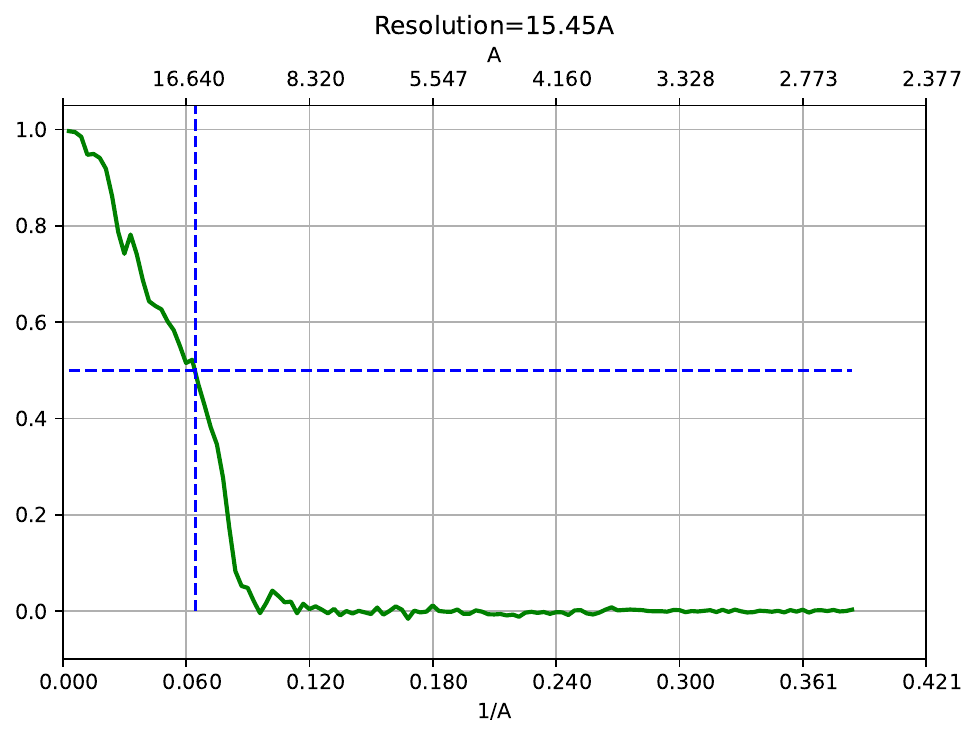}
    \subcaption{Ab initio model}\label{fig:10081-abinitio fsc}
    \endminipage
    \minipage{0.5\textwidth}
    \includegraphics[width=\linewidth]{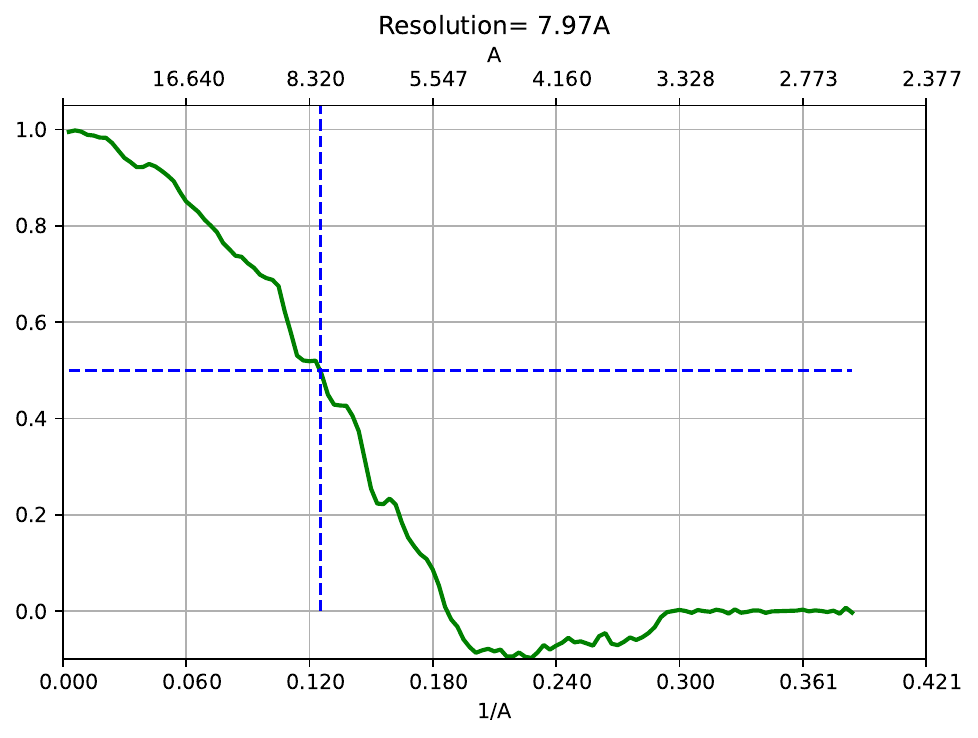}
    \subcaption{Refined model}\label{fig:10081-refine fsc}
    \endminipage
    \caption{Fourier shell correlations for the ab initio model and the refined model. }
\end{figure}
We then feed this ab initio model into CryoSPARC \citep{punjani2017cryosparc} for refinement using all the raw images and arrive at the model shown in Figure \ref{fig:10081-refine 3d} with an FSC of 7.97$\mathring{A}$, suggesting our ab initio reconstruction is reasonably accurate for iterative refinement to work.

\subsection{EMPIAR-12036}
Next we shall apply our NUG algorithm to another experimental dataset with $D_3$ symmetry taken from the \href{https://www.ebi.ac.uk/empiar/EMPIAR-12036/}{EMPIAR-12036} entry from the EMPIAR image database containing 1565602 raw images. 
We again apply RELION to generate 93 class averages of size 128 by 128  as visualized in Figure \ref{fig:12036 images} after discarding those that are visually uninterpretable.

\begin{figure}[!htb]
    \centering
    \minipage{0.2\textwidth}
    \includegraphics[width=\linewidth]{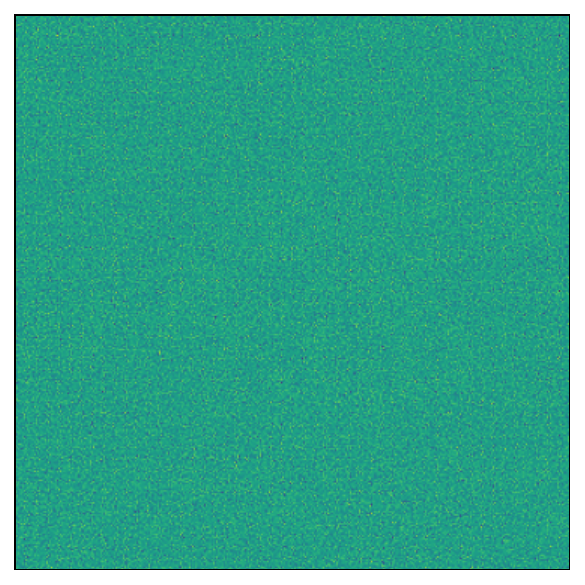}
    \endminipage\hfill
    \minipage{0.2\textwidth}
    \includegraphics[width=\linewidth]{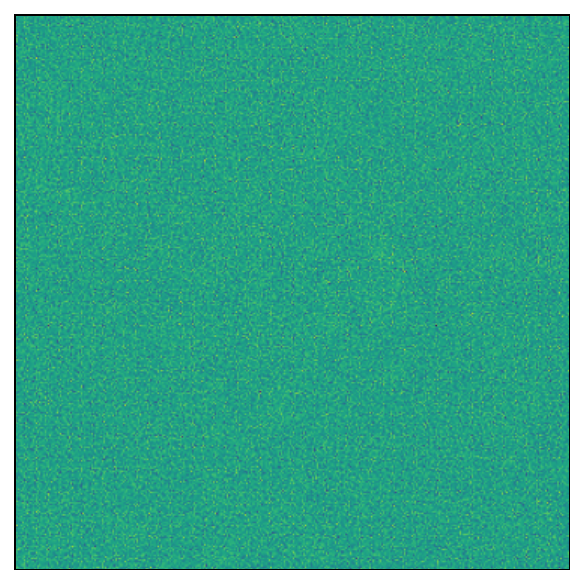}
    \endminipage\hfill
    \minipage{0.2\textwidth}
    \includegraphics[width=\linewidth]{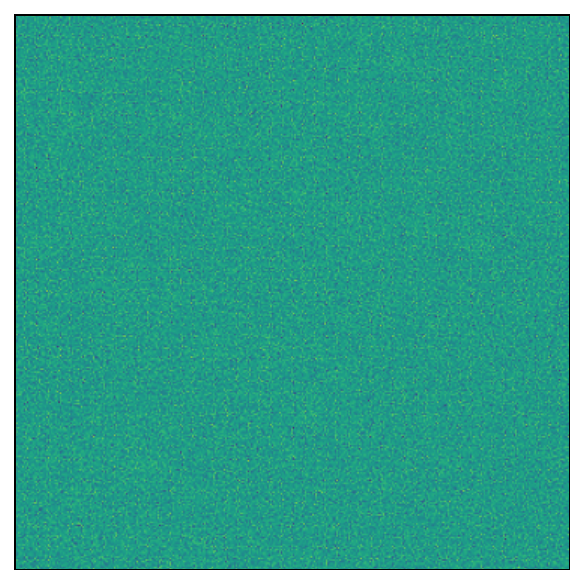}
    \endminipage\hfill
    \minipage{0.2\textwidth}
    \includegraphics[width=\linewidth]{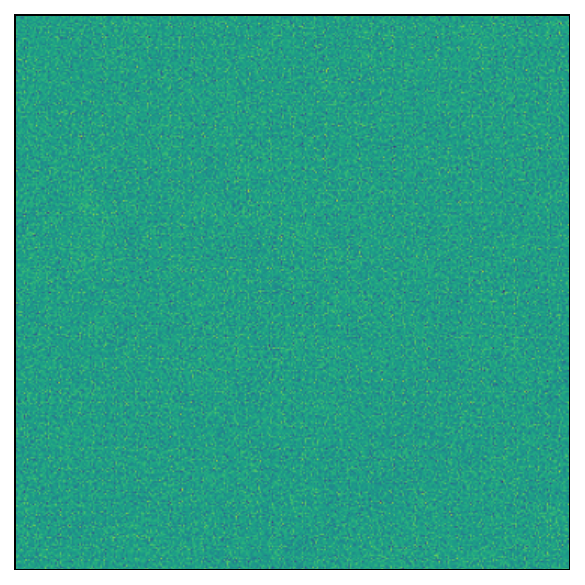}
    \endminipage\hfill
    \minipage{0.2\textwidth}
    \includegraphics[width=\linewidth]{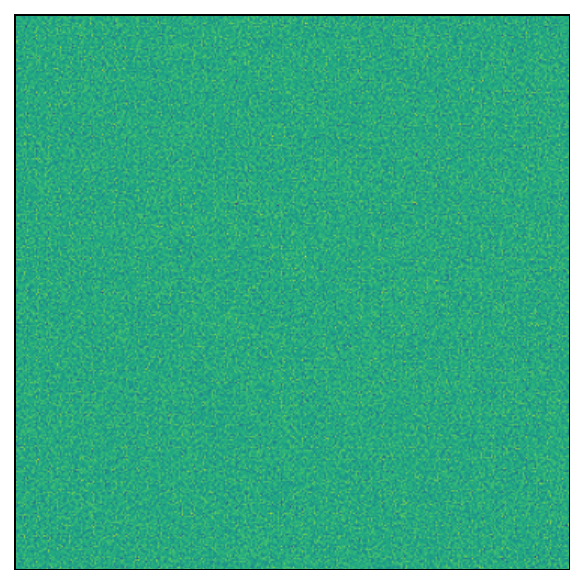}
    \endminipage\hfill

    \minipage{0.2\textwidth}
    \includegraphics[width=\linewidth]{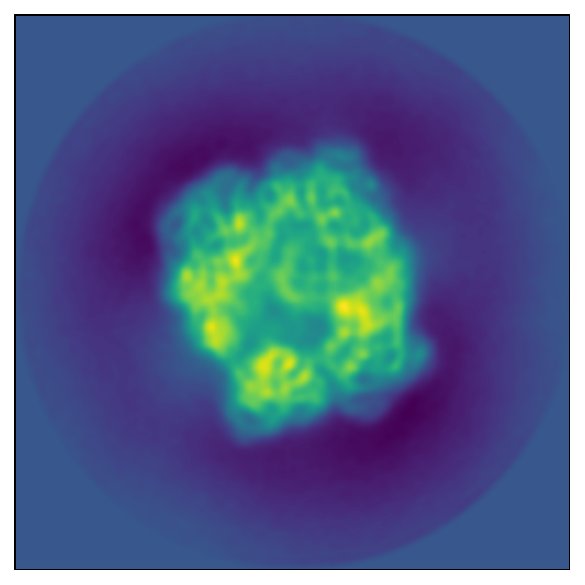}
    \endminipage\hfill
    \minipage{0.2\textwidth}
    \includegraphics[width=\linewidth]{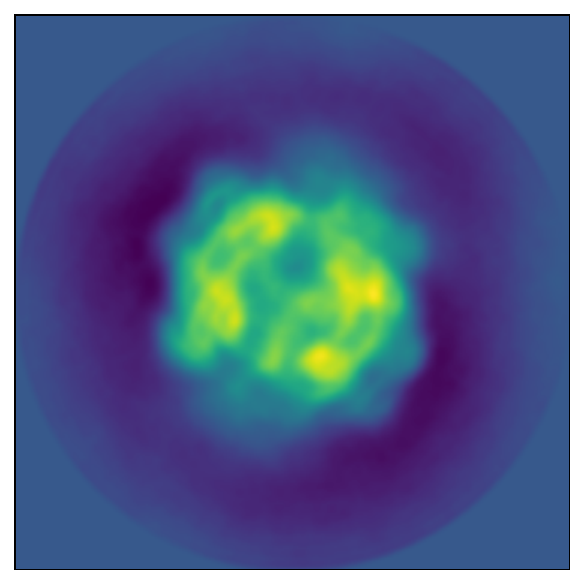}
    \endminipage\hfill
    \minipage{0.2\textwidth}
    \includegraphics[width=\linewidth]{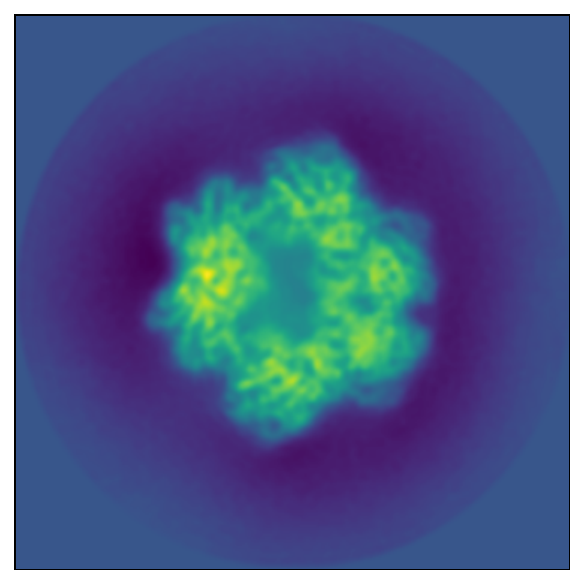}
    \endminipage\hfill
    \minipage{0.2\textwidth}
    \includegraphics[width=\linewidth]{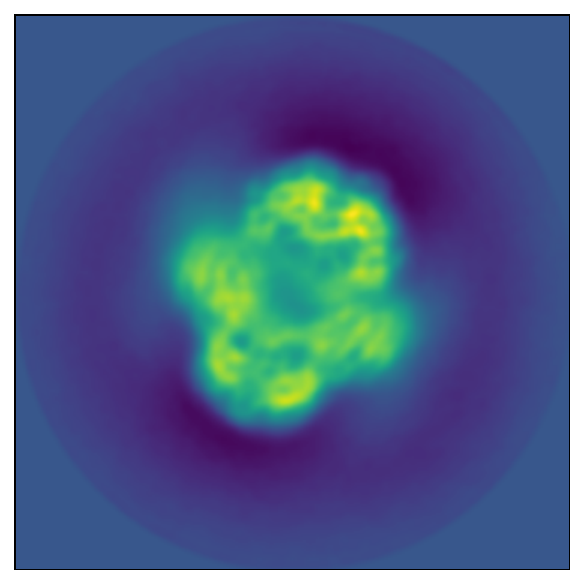}
    \endminipage\hfill
    \minipage{0.2\textwidth}
    \includegraphics[width=\linewidth]{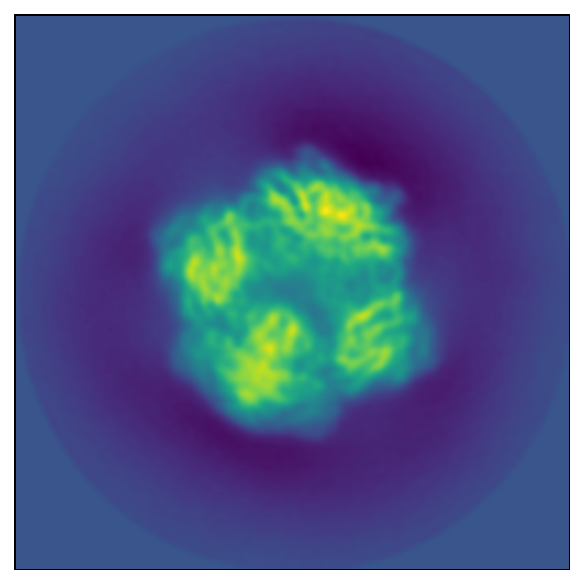}
    \endminipage\hfill
    \caption{Raw images from EMPIAR 12036 (top) and class averages computed (bottom).}\label{fig:12036 images}
\end{figure}

We apply our NUG algorithm to obtain the orientation estimations for these 93 class averages, with truncation level $K=12$ and 5 iterations of proximal refinement where in each iteration the projection rank is set to be $\operatorname{rank}(A_k)$ and the penalty is set to be $\eta_k=\frac{1}{k}$. 
The total running time is about 2 hours on an A100 GPU with 80G memory. 
The estimated orientations are used to build an ab initio model as displayed in Figure \ref{fig:12036-abinitio 3d}, yielding
a reconstruction with a Fourier shell correlation (FSC) 26.60$\mathring{A}$ with respect to the volume reported under the 12036 entry as is shown in Figure \ref{fig:12036-abinitio fsc}.
This ab initio model is then imported into CryoSPARC \citep{punjani2017cryosparc} for refinement using all the raw images and we obtain the refined model as shown in Figure \ref{fig:12036-refine 3d} with an FSC of 10.72$\mathring{A}$. 
The experiment as well as that in the previous subsection suggests the applicability of our approach to experimental datasets.

\begin{figure}[!htb]
    \centering
    \minipage{0.33\textwidth}
    \includegraphics[width=\linewidth]{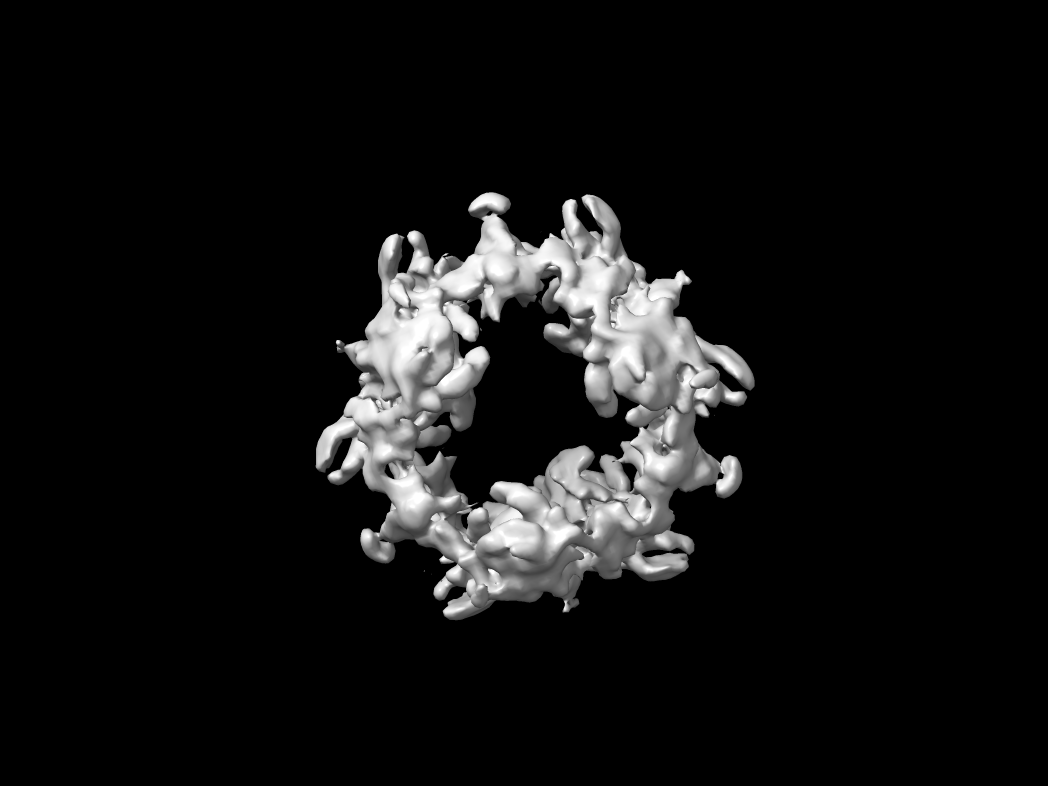}
    \subcaption{Ab initio model}\label{fig:12036-abinitio 3d}
    \endminipage\hfill
    \minipage{0.33\textwidth}
    \includegraphics[width=\linewidth]{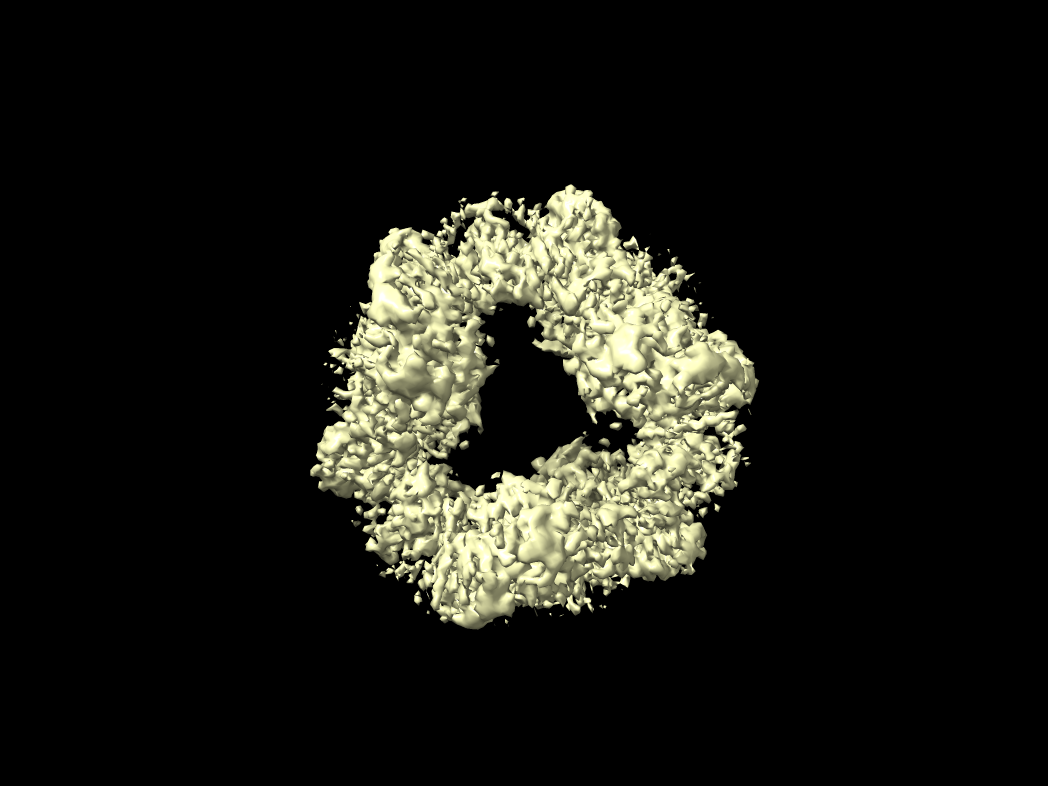}
    \subcaption{Refined model}\label{fig:12036-refine 3d}
    \endminipage\hfill
    \minipage{0.33\textwidth}
    \includegraphics[width=\linewidth]{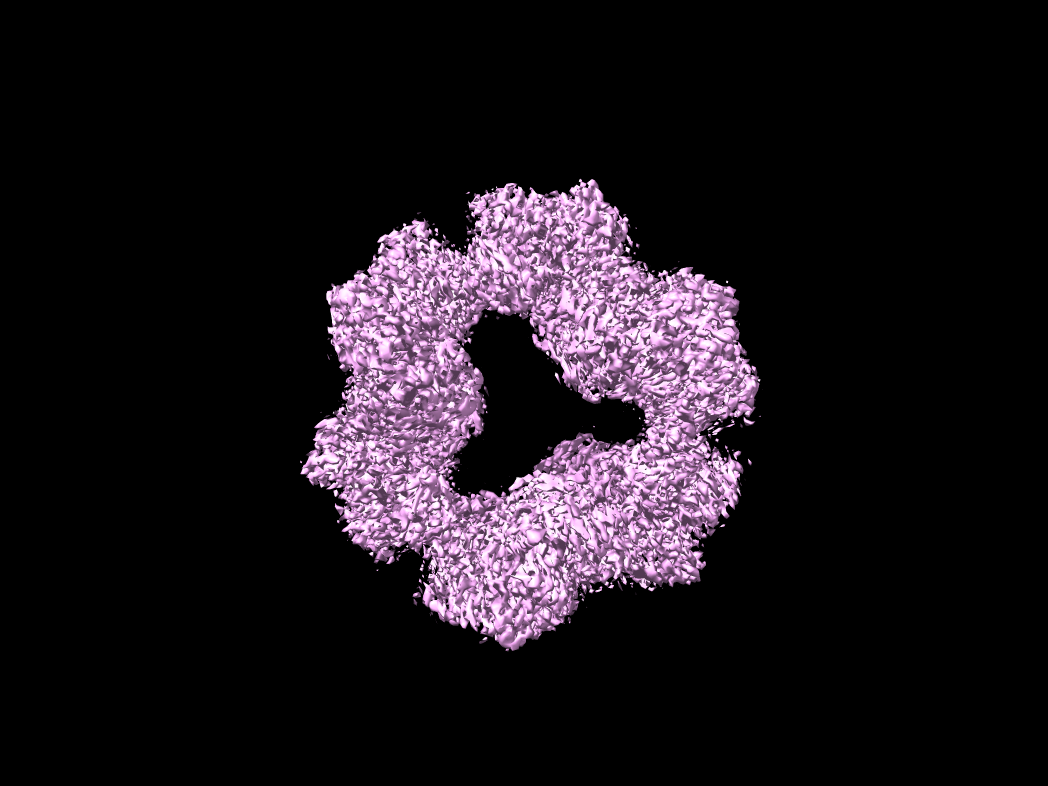}
    \subcaption{Reference model}\label{fig:12036-truth 3d}
    \endminipage\hfill
    \caption{Visualizations of the ab initio model, the refined model, and the reference model. }
\end{figure}

\begin{figure}[!htb]
    \centering
    \minipage{0.5\textwidth}
    \includegraphics[width=\linewidth]{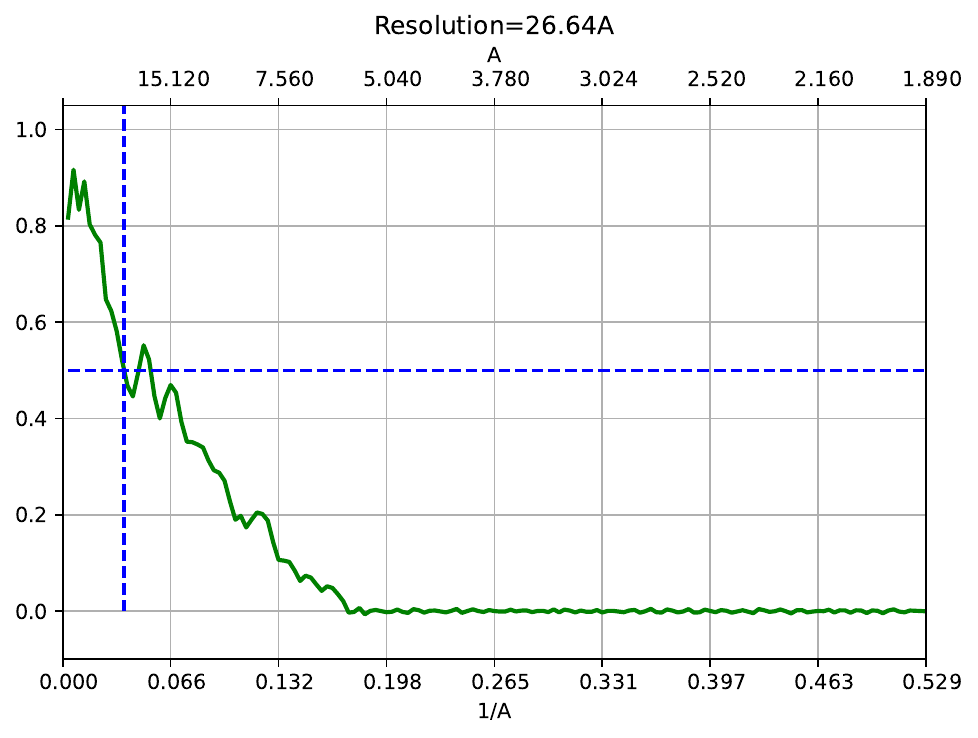}
    \subcaption{Ab initio model}\label{fig:12036-abinitio fsc}
    \endminipage
    \minipage{0.5\textwidth}
    \includegraphics[width=\linewidth]{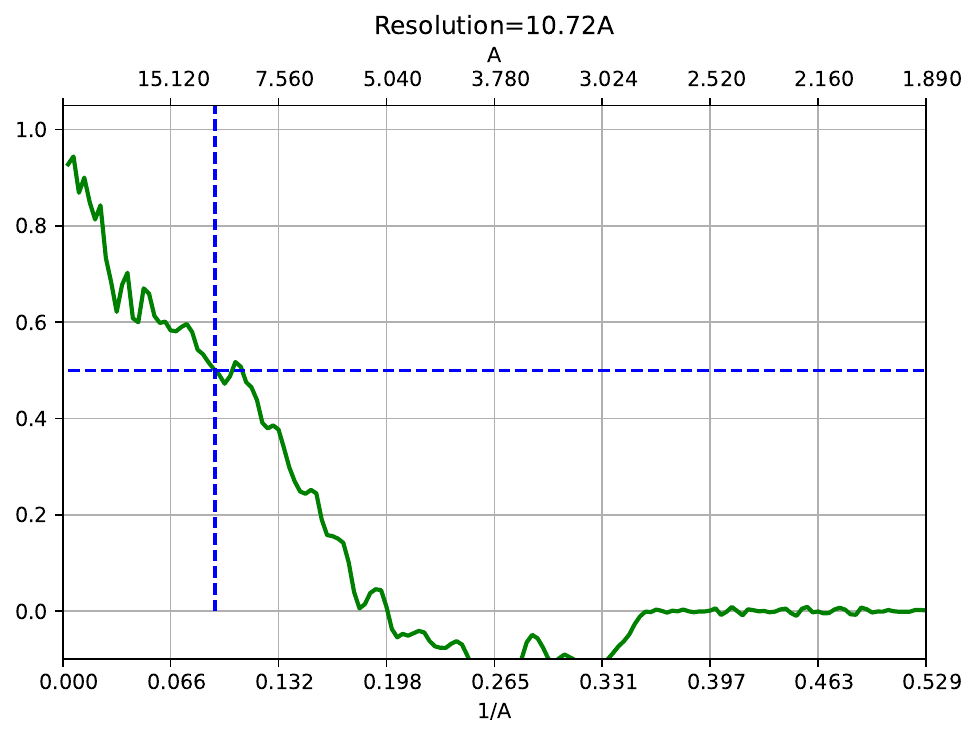}
    \subcaption{Refined model}\label{fig:12036-refine fsc}
    \endminipage
    \caption{Fourier shell correlations for the ab initio model and the refined model. }
\end{figure}

\section*{Acknowledgments}
A.S. and R.Y. were supported in part by AFOSR FA9550-23-1-0249, the Simons Foundation Math+X Investigator Award, NSF DMS2009753, NSF DMS2510039, and NIH/NIGMS R01GM13678001.
Y.S. was supported by NSF DMS2514152. 
The authors would like to thank Yoel Shkolnisky for help with generating class averages and Josh Carmichael for insights on the MSE and FSC metrics.

\appendix

\section{Wigner D Matrices}\label{sec:wigner d}
Let $\alpha,\beta,\gamma$ be the Euler angles in the ZYZ-convention. Then the Wigner D-matrix takes the form \citep{Varshalovich1988}
\begin{align} \label{eq:wigner d def}
    W^{(j)}(\alpha,\beta,\gamma)= D^{(j)}(\alpha)
    w^{(j)}(\beta)
    D^{(j)}(\gamma), \qquad 
    D^{(j)}(\theta)=
    \begin{bmatrix}
        e^{\i j\theta} & & \\
        & \ddots & \\
        & & e^{-\i j \theta} 
    \end{bmatrix},
\end{align}
where $w^{(j)}(\beta)$ is the Wigner small d-matrix taking the form 
\begin{align*}
    &\quad [w^{(j)}(\beta)]_{m'm}\\
    =& [(j+m')!(j-m')!(j+m)!(j-m)!]^{\frac12}\sum_{s=s_\text{min}}^{s_{\text{max}}}\left[\frac{(-1)^{m'-m+s}\left(\cos \frac{\beta}{2}\right)^{2j+m-m'-2s}\left(\sin \frac{\beta}{2}\right)^{m'-m+2s}}{(j+m-s)!s!(m'-m +s)!(j-m'-s)!}\right], 
\end{align*}
where $s_{\textbf{min}}=\text{max}(0,m-m')$ and $s_{\textbf{max}}=\text{min}(j+m,j-m')$. 
For $j=1$, we have 
\begin{align*}
    w^{(1)}(\beta)&= 
    \begin{bmatrix}
        \frac12 (1+\cos \beta) & \frac{\sqrt{2}}{2}\sin \beta & \frac12(1-\cos \beta) \\
        -\frac{\sqrt{2}}{2}\sin \beta & \cos \beta & \frac{\sqrt{2}}{2}\sin \beta \\
        \frac12 (1-\cos \beta) & -\frac{\sqrt{2}}{2} \sin \beta & \frac12 (1+\cos \beta)
    \end{bmatrix}\\
    W^{(1)}(\alpha,\beta,\gamma)&=
    \begin{bmatrix}
        \frac12 e^{\i \alpha }e^{\i \gamma} (1+\cos \beta)& \frac{\sqrt{2}}{2}e^{\i \alpha} \sin \beta & \frac12 e^{\i \alpha} e^{-\i \gamma} (1-\cos\beta) \\
        -\frac{\sqrt{2}}{2}e^{\i \gamma}\sin \beta & \cos \beta & \frac{\sqrt{2}}{2} e^{-\i \gamma} \sin \beta \\
        \frac12 e^{-\i \alpha} e^{\i \gamma} (1-\cos \beta) & -\frac{\sqrt{2}}{2} e^{-\i \alpha} \sin \beta & \frac12 e^{-\i \alpha}e^{-\i \gamma}(1+\cos \beta)
    \end{bmatrix}
\end{align*}

Relation between Wigner-D and rotation matrices: 
Let 
\begin{align*}
    T=\begin{bmatrix}
    -\frac{\i}{\sqrt{2}} & 0 & \frac{1}{\sqrt{2}} \\
    0 & 1 & 0 \\
    -\frac{\i}{\sqrt{2}} & 0 & -\frac{1}{\sqrt{2}}
    \end{bmatrix}.
\end{align*}
Then we have 
\begin{align*}
    T^*W^{(1)}(\alpha,\beta,\gamma)T = \begin{bmatrix}
        c_\alpha c_\gamma - s_\alpha c_\beta s_\gamma & -s_\alpha s_\beta & -c_\alpha s_\gamma - s_\alpha c_\beta c_\gamma \\
        -s_\beta s_\gamma & c_\beta & -s_\beta c_\gamma \\
        c_\alpha c_\beta s_\gamma +s_\alpha c_\gamma & c_\alpha s_\beta & c_\alpha c_\beta c_\gamma - s_\alpha s_\gamma
    \end{bmatrix}=
    \begin{bmatrix}
        R_{22} & -R_{23} & -R_{21}\\
        -R_{32} & R_{33} & R_{31} \\
        -R_{12} & R_{13} & R_{11}
    \end{bmatrix},
\end{align*}
where $R$ is the rotation matrix corresponding to the ZYZ-Euler angles $(\alpha,\beta,\gamma)$. %

\section{Invariance with respect to Global Rotation}\label{sec:invariance}
To formalize the set up, let $\phi:\R^3\rightarrow \R$ be the electric potential of the molecule and define the group action $g\circ \phi = \phi(g^{-1}x)$.  
The symmetry assumption translates to $s\circ \phi = \phi$ for all $s\in S$. 
We want to show that for a global orthogonal transformation $g\in \operatorname{O}(3)$, the set $\{g_ig^{-1}\}_{i=1}^n$ gives the same $X_{ij}^{(k)}$'s as $\{g_i\}_{i=1}^n$, which is not immediately obvious from its definition. 
The idea is that when there is a global shift, the symmetry group also changes.

Let $\widetilde{\phi}=g\circ \phi$ be the transformed molecule. We have 
\begin{align*}
    (gsg^{-1})\circ \widetilde{\phi} \overset{\text{defn. of } \widetilde{\phi}}{=} (gs)\circ \phi \overset{\text{symmetry of }\phi}{=} g\circ \phi = \widetilde{\phi},
\end{align*}
so that the symmetry group for $\widetilde{\phi}$ is $gSg^{-1}$. 
Now since $g_i\circ \phi=(g_ig^{-1})\circ \widetilde{\phi}$, the associated rotations for $\widetilde{\phi}$ are $\{g_ig^{-1}\}_{i=1}^n$ and we have 
\begin{align*}
    \widetilde{X}_{ij}^{(k)} &= \frac{1}{|S|}\sum_{\widetilde{s}\in \widetilde{S}}\rho_k(g_ig^{-1}) \rho_k(\widetilde{s})\rho_k(g_jg^{-1})^*\\
    &  = \frac{1}{|S|}\sum_{s\in S} \rho_k(g_ig^{-1}) \rho_k(gsg^{-1})\rho_k(gg_j^{-1})
    =\frac{1}{|S|}\sum_{s\in S} \rho_k(g_i) \rho_k(s)\rho_k(g_j^{-1}) = X_{ij}^{(k)}.
\end{align*}
An implication of this observation is that when recovering the $g_i$'s, we can pick a ``nice'' coordinate system over which the $A_k$'s have nice structures, and recover the $g_i$'s as the rotation with respect to such a coordinate system.

\section{Details of Least Squares in Section \ref{sec:euler angle recovery}}\label{sec:complex matrix least squares}

\subsection{Type I}\label{sec:least squares type 1}
Suppose we want to solve the problem 
\begin{align*}
    \underset{c\in C}{\operatorname{min}}\,\, \|cA-B\|_{F},
\end{align*}
where $A,B\in \C^{p\times p}$.
Writing $c=c_r+\i c_i$, $A=A_r+\i A_i$ and $B=B_r+\i B_i$ in terms of real and imaginary parts, we have 
\begin{align*}
    f(c_r,c_i) := \|cA-B\|_F^2  = \|c_r A_r  - c_iA_i - B_r\|_F^2 + \|c_iA_r+c_rA_i -B_i\|_F^2.
\end{align*}
Setting partial derivatives to be zero, we obtain
\begin{align*}
    0=\frac{\partial f}{\partial c_r} & = 2(\langle A_r,A_r\rangle + \langle A_i,A_i\rangle)c_r - 2\langle A_r,B_r\rangle - 2\langle A_i,B_i\rangle \quad \Longrightarrow \quad c_r = \frac{\langle A_r,B_r\rangle + \langle A_i,B_i\rangle}{ \langle A_r,A_r\rangle+ \langle A_i,A_i\rangle},\\
    0=\frac{\partial f}{\partial c_i} & = 2(\langle A_r,A_r\rangle+ \langle A_i,A_i\rangle)c_i + 2\langle A_i,B_r\rangle- 2\langle A_r,B_i\rangle  \quad \Longrightarrow \quad  c_i=\frac{\langle A_r,B_i\rangle-\langle A_i,B_r\rangle}{\langle A_r,A_r\rangle+\langle A_i,A_i\rangle}.
\end{align*}

\subsection{Type II}\label{sec:least squares type 2}
Suppose we want to solve for the problem
\begin{align*}
    \underset{c\in \C}{\operatorname{min}}\,\, \|cA+\bar{c}B-C\|_{F},
\end{align*}
where $A,B\in \R^{p\times p}$ and $C\in \C^{p\times p}$. 
Writing $c=c_r+\i c_i$ and $C=C_r+\i C_i$ in terms of real and imaginary parts, we have 
\begin{align*}
    f(c_r,c_i):=\|cA+\bar{c}B-C\|_F^2=\|c_r A+ c_r B -C_r\|_F^2 + \|c_i A - c_i B -C_i\|_F^2.
\end{align*}
Setting partial derivatives to be zero, we obtain
\begin{align*}
    0=\frac{\partial f}{\partial c_r} &= 2\langle A+B,A+B\rangle c_r -2\langle A+B,C_r \rangle \quad \Longrightarrow \quad c_r=\frac{\langle A+B,C_r \rangle}{\langle A+B,A+B\rangle},\\
    0=\frac{\partial f}{\partial c_i} &= 2\langle A-B,A-B\rangle c_i -2\langle A-B,C_i\rangle \quad \Longrightarrow \quad c_i=\frac{\langle A-B,C_i \rangle}{\langle A-B,A-B\rangle} .
\end{align*}

\subsection{Type III}\label{sec:least squares type 3}
Suppose we want to solve for the problem
\begin{align*}
    \underset{c,d\in \C}{\operatorname{min}}\,\, \|cA_1+\overline{c}A_2+dB_1+\overline{d}B_2-C\|_{F},
\end{align*}
where $A_i,B_i\in \R^{p\times p}$ and $C\in \C^{p\times p}$.
Writing $c=c_r+\i c_i$, $d=d_r+\i d_i$, $C=C_r+\i C_i$ in terms of real and imaginary parts, we have 
\begin{align*}
    f(c_r,c_i,d_r,d_i):&= \|cA_1+\overline{c}A_2+dB_1+\overline{d}B_2-C\|_{F}^2\\
    & = \|c_r(A_1+A_2) + d_r(B_1+B_2) - C_r\|_F^2 + \|c_i(A_1-A_2)+d_i(B_1-B_2)-C_i\|_F^2.
\end{align*}
Setting partial derivatives equal to zero, we obtain the $2\times 2$ systems
\begin{align*}
    \begin{bmatrix}
        \langle A_1+A_2,A_1+A_2\rangle  & \langle A_1+A_2,B_1+B_2\rangle \\
        \langle A_1+A_2,B_1+B_2\rangle  & \langle B_1+B_2,B_1+B_2\rangle
    \end{bmatrix}
    \begin{bmatrix}
        c_r\\
        d_r
    \end{bmatrix}
    =
    \begin{bmatrix}
        \langle A_1+A_2,C_r\rangle \\
        \langle B_1+B_2,C_r\rangle
    \end{bmatrix},
\end{align*}
\begin{align*}
    \begin{bmatrix}
        \langle A_1-A_2,A_1-A_2\rangle  & \langle A_1-A_2,B_1-B_2\rangle \\
        \langle A_1-A_2,B_1-B_2\rangle  & \langle B_1-B_2,B_1-B_2\rangle
    \end{bmatrix}
    \begin{bmatrix}
        c_i\\
        d_i
    \end{bmatrix}
    =
    \begin{bmatrix}
        \langle A_1-A_2,C_r\rangle \\
        \langle B_1-B_2,C_r\rangle
    \end{bmatrix}
\end{align*}
so that $(c_r,c_i,d_r,d_i)$ can be solved with least squares.

\bibliographystyle{abbrvnat} 
\bibliography{bib}       

\end{document}